\newcommand{\Reals}{\mathbb{R}}
\newcommand{\eukl}[2]{\langle #1,#2\rangle}
\newcommand{\Hess}{\nabla^{2}}
\newcommand{\Rdd}{\mathbb{R}^{2d}}
\newcommand{\BG}{\pi_{BG}}
\DeclareMathOperator*{\argmin}{arg\,min}
\newtheorem{theorem}{Theorem}
\newtheorem{proposition}{Proposition}
\newtheorem{lemma}{Lemma}
\def\eqref#1{equation~\ref{#1}}
\def\1{\bm{1}}
\DeclareMathAlphabet{\mathsfit}{\encodingdefault}{\sfdefault}{m}{sl}
\SetMathAlphabet{\mathsfit}{bold}{\encodingdefault}{\sfdefault}{bx}{n}
\newcommand{\E}{\mathbb{E}}
\theoremstyle{plain}
\theoremstyle{definition}
\theoremstyle{remark}
\newcommand\numberthis{\addtocounter{equation}{1}\tag{\theequation}}
\title{Hamiltonian Score Matching and Generative Flows
}
\author{%
  Peter Holderrieth\\
  MIT CSAIL\\
  \texttt{phold@mit.edu}
  \And
  Yilun Xu \\
  NVIDIA \\
\texttt{yilunx@nvidia.com}
  \AND
  Tommi Jaakkola \\
  MIT CSAIL \\
  \texttt{tommi@csail.mit.edu} \\
}
\def\setstretch#1{\renewcommand{\baselinestretch}{#1}}
\begin{document}
\maketitle
\begin{abstract}
Classical Hamiltonian mechanics has been widely used in machine learning in the form of Hamiltonian Monte Carlo for applications with \emph{predetermined} force fields. In this work, we explore the potential of deliberately designing force fields for Hamiltonian ODEs, introducing Hamiltonian velocity predictors (HVPs) as a tool for score matching and generative models. We present two innovations constructed with HVPs: \emph{Hamiltonian Score Matching (HSM)}, which estimates score functions by augmenting data via Hamiltonian trajectories, and \emph{Hamiltonian Generative Flows (HGFs)}, a novel generative model that encompasses diffusion models and flow matching as HGFs with zero force fields. We showcase the extended design space of force fields by introducing Oscillation HGFs, a generative model inspired by harmonic oscillators. Our experiments validate our theoretical insights about HSM as a novel score matching metric and demonstrate that HGFs rival leading generative modeling techniques.
\end{abstract}

\section{Introduction}
Hamiltonian mechanics is a cornerstone of classical physics, providing a powerful framework for analyzing the dynamics of physical systems \citep{marsden2013introduction,arnol2013mathematical}. The Hamiltonian formalism has been widely applied in machine learning and Bayesian statistics via Hamiltonian Monte Carlo (HMC) \citep{duane1987hybrid, neal2011mcmc, chen2014stochastic}. In this setting, the goal is to sample from a probability distribution $\pi$ whose density $\pi(x)$ is known up to a normalization factor. In HMC, one interprets $\nabla\log\pi(x)$ as a force function and plugs it into a Hamiltonian ODE to construct a fast-mixing Markov chain exploring the data space quickly. This makes HMC one of the most powerful sampling algorithms to date \citep{neal2011mcmc, hoffman2014no}.

In generative modeling, the density $\pi(x)$ is unknown, only data samples $x_1,\dots,x_n\sim\pi$ are given, and the goal is to learn to generate novel samples from $\pi$. Current state-of-the-art models are based on diffusion \citep{sohl2015deep,song2019generative,song2020score,ho2020denoising} and enjoy widespread success in image generation \citep{rombach2022high}, molecular generation \citep{corso2022diffdock}, and robotics \citep{chi2023diffusion}. Diffusion models \emph{learn} the score function $\nabla\log\pi_\sigma(x)$ for a range of noise scales $\sigma$ via denoising score matching (DSM) \citep{vincent2011connection}. This enables one to subsequently generate high-quality samples by following a stochastic differential equation \citep{song2020score}.

In light of the success of HMC, it is natural to ask whether the Hamiltonian formalism can also improve generative models or provide novel insights into their construction. Previous works have exploited the connection between Hamiltonian physics and generative modeling for specific force fields \citep{dockhorn2021score}. However, these works usually consider particular (fixed) force fields and stay within the diffusion framework. 

More recently, flow-based generative models such as flow matching have enabled scalable training of continuous normalizing flows (CNFs) \citep{lipman2022flow, liu2022flow}. These ODE-based models allow to craft first-order ODEs transforming arbitrary distributions from one to another. Inspired by these successes, we consider the Hamiltonian ODE as a Neural ODE \citep{chen2018neural}. We show that we can marginalize out velocity distributions and then follow backward ODEs that faithfully recover data distributions. Importantly, we provide an associated theorem that holds for any force field. With the growing success of generative models in physical sciences \citep{corso2022diffdock, watson2023novo, abramson2024accurate}, it is striking that most approaches do not use existing known force fields - sometimes leading to physically implausible
results~\citep{abramson2024accurate}. A framework that allows to reason natively about force fields in the context of generative models would be very promising for such applications \citep{akhound2024iterated, de2024target}. This work aims to build towards such a deeper integration.

We explore the intricate relationship between Hamiltonian dynamics, force fields, and generative models. Specifically, we make the following contributions:
\begin{enumerate}
\item \textbf{Hamiltonian velocity predictor: }We introduce the concept of a Hamiltonian Velocity Predictor (HVP) and show the utility of HVPs for score matching and generative modeling (\cref{sec:hvps}).
\item \textbf{Hamiltonian score discrepancy (HSD): }We introduce and validate Hamiltonian score discrepancy (HSD), a novel score matching metric based on HVPs and a corresponding score matching method (\cref{sec:hsm}).
\item \textbf{Hamiltonian generative flows (HGFs): }We show that the location marginal of a Hamiltonian ODE is generated via the Hamiltonian Velocity Predictor (\cref{sec:hgfs}). This leads to a novel generative model generalizing diffusion models and flow matching (\cref{sec:dm_and_fm}).
\item \textbf{Oscillation HGFs: }As special HGFs, we 
study Oscillation HGFs, a simple generative model rivaling the performance of diffusion models due to in-built scale-invariance (\cref{sec:ohgfs}).
\end{enumerate}

\section{Background}
\label{sec:background}


\subsection{Hamiltonian Dynamics}
\label{sec:back-hami}
The inspiration of using Hamiltonian dynamics in machine learning comes from considering a data point $x\in\mathbb{R}^d$ as coordinates of an object in $\mathbb{R}^d$. Such an object also has a velocity $v\in\mathbb{R}^d$. Let $\pi$ be a probability distribution with density function $\pi:\mathbb{R}^d\to\mathbb{R}_{\geq 0}$. Assuming unit mass, the energy of such an object is given by its Hamiltonian~\citep{marsden2013introduction,arnol2013mathematical}, defined by:
\begin{align}
H(x,v)=U(x)+\frac{1}{2}\|v\|^2,
\end{align}
where $U:\mathbb{R}^d\to\mathbb{R}$ is a potential function. The key idea behind using Hamiltonian dynamics in the context of probabilistic modeling and sampling is to set the potential function as negative log-likelihood, i.e., $U(x)=-\log\pi(x)$. One defines the \emph{Boltzmann-Gibbs distribution} $\pi_{BG}$ then as:
\begin{align*}
	\pi_{BG}=\pi\otimes\mathcal{N}(0,\mathbf{I}_d),\quad \pi_{BG}(x,v)=\exp(-H(x,v))/Z=\pi(x)\mathcal{N}(v;0,\mathbf{I}_d),
\end{align*}
i.e. the product distribution of the data distribution and normal distribution. In particular, one can easily draw a sample $z$ from $z\sim\pi_{BG}$ by sampling $x\sim\pi, v\sim\mathcal{N}(0,\mathbf{I}_d)$ and setting $z=(x,v)$.

Hamiltonian dynamics describe how an object described by $z=(x_0,v_0)$ evolves over time. It is defined by the ODE \citep{arnol2013mathematical}
\begin{align}
\label{eq:true_ham_ode}
(\frac{d}{dt}x(t),\frac{d}{dt}v(t))
&= (v(t),-\nabla U(x(t)))=(v(t),\nabla\log\pi(x(t))),\\
(x(0),v(0))&=z
\end{align}
i.e. the change of location is the velocity and the change of velocity is the force (here, equals acceleration as we assume unit mass). Let $\varphi:\mathbb{R}^{2d}\times\mathbb{R}\to\mathbb{R}^{2d},(z,t)\mapsto\varphi_t(z)$ be the corresponding flow, i.e. the function $t\mapsto \varphi_t(z)$ is a solution to the above ODE with starting point $z$.

As one would expect from physics, Hamiltonian dynamics $\varphi_t$ preserve the energy of a system, i.e. $H(\varphi_t(x,v))=H(x,v)$ for all $t,x,v$ (see proof in \cref{appendix:proof_energy_conservation}). This physical intuition translates into the fact that Hamiltonian dynamics preserve the Boltzmann-Gibbs distribution, i.e. 
\begin{align}
\label{eq:bg_preservation}
Z\sim \pi_{BG} \Rightarrow \varphi_t(Z)\sim \pi_{BG} \text{ for all }t\geq 0
\end{align}
We include a derivation of this well-known statement in
\cref{subsec:bg_preservation} as it is so central to this work.

\subsection{Score Matching}
\label{sec:back-score}

The goal of score matching is to learn the \emph{score function} $\nabla \log \pi$ from data samples $x_1,\dots,x_n\sim \pi$. As the score function naturally appears in the Hamiltonian ODE (see \cref{eq:true_ham_ode}), we interpret it as a force function and denote a parameterized score model by $F_\theta:\mathbb{R}^d\to\mathbb{R}^d$. A natural approach to fitting $F_\theta$ is to minimize the mean squared error between $F_\theta$ and the true score weighted by their likelihood under $\pi$. This leads to the \emph{explicit score matching loss} \citep{hyvarinen2005estimation} given by
\begin{align}
    \label{eq:esm_loss}
    L_{\text{esm}}(\theta;\pi)=\mathbb{E}_{x\sim \pi}\left[\frac{1}{2}\|\nabla\log\pi(x)-F_\theta(x)\|^2\right].
\end{align}
This loss cannot be minimized directly as one does not have access to $\nabla\log\pi$ and various score-matching methods differ in how they circumvent not having access to $\nabla\log\pi$ (see \cref{appendix:sm_overview} for a detailed overview).

A different approach to score matching is to slightly modify the objective by adding Gaussian noise to the data distribution $\pi$ to get the noisy distribution $\pi_{\sigma}(x)=\int\mathcal{N}(x;x_0;\sigma^2\mathbf{I}_d)\pi(x_0)dx_0$
\citep{vincent2011connection}. The objective can then be expressed as \emph{denoising score matching}:
\begin{align}
\label{eq:dsm}
L_{\text{dsm}}(\theta;\pi_{\sigma}) = 
\mathbb{E}_{x\sim \pi, \epsilon\sim\mathcal{N}(0,\mathbf{I}_d)}
\left[
\|F_\theta(x+\sigma\epsilon)+\frac{\epsilon}{\sigma}\|^2
\right] = L_{\text{esm}}(\theta;\pi_{\sigma}) + C_\sigma
\end{align}
for a constant $C_\sigma$. Noised data distributions naturally appear in diffusion models, and the denoising score-matching objective, therefore, became the state-of-the-art method to train diffusion models. However, denoising score matching suffers from high variance leading to long training times as well as computationally expensive sampling \citep{song2021train, xu2023stable}. In addition, it would be an unreasonable choice for an application where it is important to learn the original data distribution.

\section{Hamiltonian Velocity Predictors}
\label{sec:hvps}
The Hamiltonian ODE has been widely used in Bayesian statistics in the form Hamiltonian Monte Carlo \citep{duane1987hybrid, neal2011mcmc}. However, in such settings, it is assumed that one has access to the score $\nabla\log\pi$ (=force), and the goal is to sample from $\pi$ (by sampling from $\pi_{BG}$). In machine learning tasks, the inverse is true. For such tasks, one has access to data samples $x_{1},\dots,x_{n}\sim\pi$ and the goal is to (1) learn $\nabla\log\pi$ (score matching) or (2) create new samples $x\sim\pi$ (generative modeling) - or both. 

\paragraph{Parameterized Hamiltonian ODEs (PH-ODEs).} This inspires the definition of a \emph{parameterized Hamiltonian ODE} (PH-ODE). A PH-ODEs consists of two components:
\begin{enumerate}
    \item \textbf{Initial distribution $\Pi$:} The starting condition $z=(x,v)\sim\Pi$ is distributed according to a joint \emph{location-velocity distribution} $\Pi$ such that its marginal over $x$ is $\pi$:
\begin{align}
    \int \Pi(x,v)dv = \pi(x)
\end{align}
In the default cause, $\Pi$ equals the Boltzmann-Gibbs distribution $\pi_{BG}$, i.e. $\Pi=\pi\otimes\mathcal{N}(0,\mathbf{I}_d)$. 
\item \textbf{Force field: }The evolution is governed by a parameterized force field $F_\theta:\mathbb{R}^d\times\mathbb{R}\to\mathbb{R}^d$ via:
\begin{align}
(\frac{d}{dt}x(t),\frac{d}{dt}v(t))
&= (v(t),F_\theta(x(t),t))\\
(x(0),v(0))&=(x_0,v_0)=z
\end{align}
\end{enumerate}

As we consider $\Pi$ as part of the definition of a PH-ODE, we write with a slight abuse of notation
\begin{align}
\varphi_t^\theta(z)=(x_t^\theta(z),v_t^\theta(z))=(x_t^\theta,v_t^\theta)
\end{align}
for the solution of the ODE assuming $z=(x_0,v_0)\sim\Pi$. We note that while $F_\theta$ might refer to a $\theta$-parameterized neural network, we can also set it to a fixed known vector field. Both cases will be important.

\paragraph{Velocity prediction is all you need.} The crucial idea of this work is that one can use PH-ODEs for both score matching and generative modeling by \textbf{predicting velocities}. For this, we use an auxiliary family of functions $V_\phi:\mathbb{R}^d\times\mathbb{R}\to\mathbb{R}^d$, here usually a neural network. Let us consider the following velocity prediction loss:
\begin{align}
    \label{eq:vel_pred_general}
    L_{\text{V}}(\phi|\theta,t)=&
    \mathbb{E}_{(x,v)\sim\Pi}[\|V_\phi(x_t^\theta,t)-v_t^\theta\|^2]
\end{align}
By minimizing the above loss over $\phi$, 
we train $V_\phi$ to predict the velocity given the location after running the Hamiltonian ODE with starting conditions defined by $\Pi$. For a sufficiently rich class of functions $V_\phi$, the minimizer $V_{\phi^{*}}$ of the above loss is the expected velocity conditioned on the location (see \cref{appendix:proof_ovp_equation}):
\begin{align}
    \label{eq:ovp_equation}
    V_{\phi^{*}}(x,t) = \mathbb{E}[v_t^\theta|x_t^\theta=x]
\end{align}
As we will see, this quantity is "all you need" for generative modeling and score matching. We call $V_{\phi^*}$ the \textbf{Hamiltonian velocity predictor (HVP)}. We note that many of flow-based works also implicitly learn to predict velocities \citep{lipman2022flow}, although they do not consider them as separate states.
\begin{figure}[h]
    \centering
    \begin{subfigure}[b]{0.3\textwidth}
        \centering
        \includegraphics[width=\textwidth]{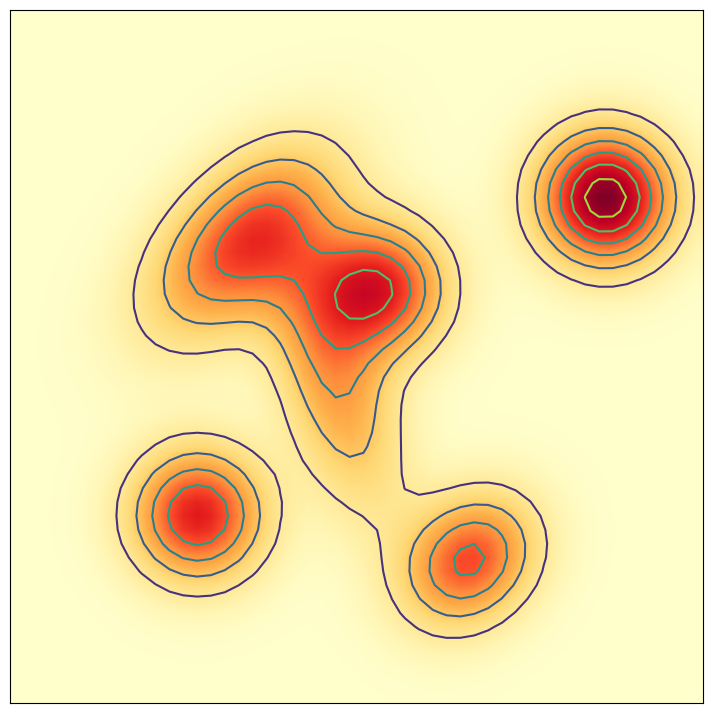} 
        \caption{Density $\pi(x)$}
        \label{fig:sub1}
    \end{subfigure}
    \hfill 
    \begin{subfigure}[b]{0.3\textwidth}
        \centering
    \includegraphics[width=\textwidth]{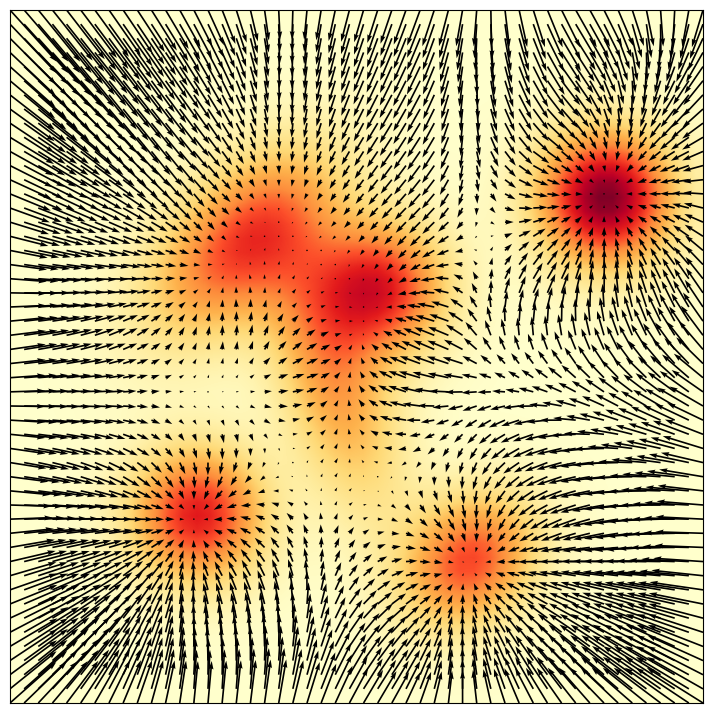}
        \caption{Learnt score $F_\theta$}
        \label{fig:sub2}
    \end{subfigure}
    \hfill 
    \begin{subfigure}[b]{0.3\textwidth}
        \centering
        \includegraphics[width=\textwidth]{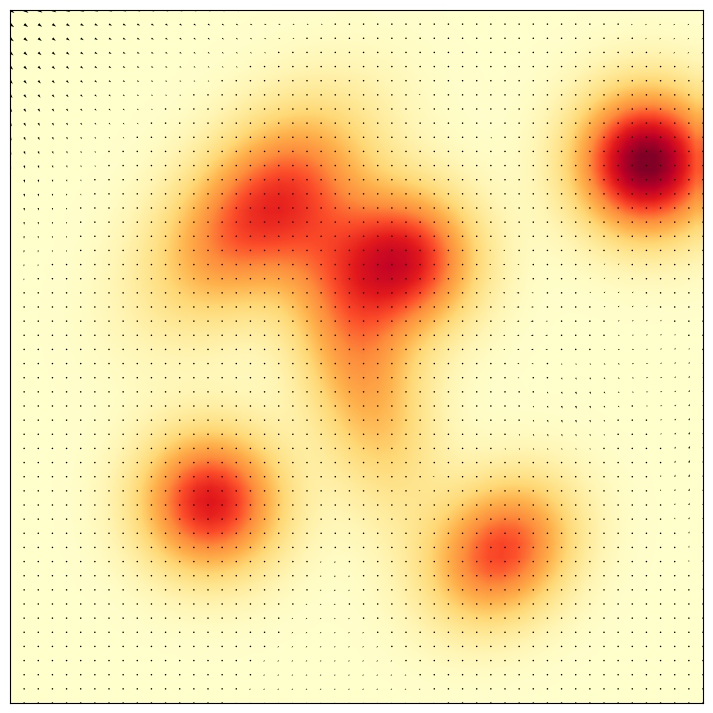}
        \caption{Learnt velocity predictor $V_\phi$}
        \label{fig:sub3}
    \end{subfigure}
    \caption{Results of training HSM on a Gaussian mixture. The score vector field faithfully recovers gradients of the density. The optimal velocity predictor is zero everywhere.}
    \label{fig:hsm_validation}
\end{figure}
\vspace{-0.5cm}
\section{Hamiltonian Score Matching}
\label{sec:hsm}
In this section, we provide a novel score-matching method using PH-ODEs and velocity predictors. We will first connect the score function to a preservation property of Hamiltonian systems (Section~\ref{sec:char-hami}), and then introduce a new score-matching objective derived from this property (Sections~\ref{sec:disc} and Section~\ref{subsec:hsm}).

\subsection{Characterizing Scores with Hamiltonian Dynamics}
\label{sec:char-hami}
The conservation of the Boltzmann-Gibbs distribution $\pi_{BG}$ (see \cref{eq:bg_preservation}) is the crucial property that enables Hamiltonian Monte Carlo as it allows for proposals of distant states enabling fast mixing of a Markov chain. The inspiration of this work was to take the inverse perspective: rather than considering the preservation of $\pi_{BG}$ a \emph{useful} property of the Hamiltonian ODE, we ask whether it is the \emph{defining} property of the score function. In other words, \textbf{is any vector field that preserves the Boltzmann-Gibbs distribution under a PH-ODE automatically the score?} And if yes, \textbf{could we train for this property to learn a score?} As we will show, we can characterize the score with an even easier-to-train preservation property solely depending on the velocity predictor.
\begin{theorem}
\label{theorem:scorecharacterization}
Let $T>0$ and $F_\theta(x)$ a force field. Let $\Pi=\pi_{BG}=\pi\otimes\mathcal{N}(0,\mathbf{I}_d)$. The following statements are equivalent:
\begin{enumerate}
    \item \textbf{Score vector field: }The force field $F_\theta$ equals the score, i.e. $F_\theta(x) = \nabla_{x}\log\pi(x)$ for $\pi$-almost every $x\in\mathbb{R}^d$.
    \item \textbf{Preservation of Boltzmann-Gibbs: }The PH-ODE with $F_\theta$ preserves the Boltzmann-Gibbs distribution $\pi_{BG}$.
    \item \textbf{Conditional velocity is zero: }The velocity given the location after running the PH-ODE with $F_\theta$ is zero if starting conditions $z=(x_0,v_0)$ are sampled from $\pi_{BG}$:
    \begin{align}
\label{eq:cond_vel_zero_condition}
z\sim\pi_{BG}\quad \Rightarrow \quad \mathbb{E}[v_t^\theta(z)|x_t^\theta(z)]=0\quad\text{for all }0\leq t<T
\end{align}
\end{enumerate}
\end{theorem}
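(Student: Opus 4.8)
The plan is to prove the three statements equivalent via the cycle $1\Rightarrow 2\Rightarrow 3\Rightarrow 1$, since $1\Rightarrow 2$ and $2\Rightarrow 3$ are essentially immediate and all the work is in $3\Rightarrow 1$. For $1\Rightarrow 2$: when $F_\theta=\nabla_x\log\pi$ the PH-ODE coincides with the Hamiltonian ODE~\cref{eq:true_ham_ode}, so the statement is exactly the Boltzmann--Gibbs preservation~\cref{eq:bg_preservation} (volume preservation of the flow together with conservation of $H$ makes $\pi_{BG}\propto e^{-H}$ invariant); I would simply invoke the derivation in \cref{subsec:bg_preservation}. For $2\Rightarrow 3$: if the PH-ODE preserves $\pi_{BG}$ and $z\sim\pi_{BG}$, then $(x_t^\theta(z),v_t^\theta(z))\sim\pi_{BG}=\pi\otimes\mathcal{N}(0,\mathbf{I}_d)$ for every $t$, and under this joint law the velocity coordinate is independent of the location and centered, so $\mathbb{E}[v_t^\theta(z)\mid x_t^\theta(z)]=\mathbb{E}[v_t^\theta(z)]=0$.

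For $3\Rightarrow 1$, the idea is to read the score off from the infinitesimal behavior of~\cref{eq:cond_vel_zero_condition} near $t=0$ through a weak (test-function) formulation. Fix a smooth, compactly supported $g:\mathbb{R}^d\to\mathbb{R}^d$ and set $I(t)=\mathbb{E}_{z\sim\pi_{BG}}[\langle v_t^\theta(z),\,g(x_t^\theta(z))\rangle]$. By the tower property and~\cref{eq:cond_vel_zero_condition}, $I(t)=\mathbb{E}\big[\langle\mathbb{E}[v_t^\theta\mid x_t^\theta],\,g(x_t^\theta)\rangle\big]=0$ for every $t\in[0,T)$, hence $I'(0)=0$. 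Differentiating inside the expectation, using $\dot x_t^\theta=v_t^\theta$ and $\dot v_t^\theta=F_\theta(x_t^\theta)$, then evaluating at $t=0$, where $(x_0,v_0)\sim\pi\otimes\mathcal{N}(0,\mathbf{I}_d)$ with $v_0$ independent of $x_0$ and $\mathbb{E}[v_0v_0^{\top}]=\mathbf{I}_d$, and writing $Dg$ for the Jacobian of $g$,
\[
0=I'(0)=\mathbb{E}\big[\langle F_\theta(x_0),g(x_0)\rangle\big]+\mathbb{E}\big[v_0^{\top}Dg(x_0)v_0\big]=\int\langle F_\theta(x),g(x)\rangle\,\pi(x)\,dx+\int(\nabla\cdot g)(x)\,\pi(x)\,dx.
\]
Integrating the last term by parts (no boundary contribution since $g$ has compact support) gives $\int(\nabla\cdot g)\,\pi\,dx=-\int\langle g,\nabla\pi\rangle\,dx=-\int\langle g,\nabla\log\pi\rangle\,\pi\,dx$, so that $\int\langle F_\theta(x)-\nabla\log\pi(x),\,g(x)\rangle\,\pi(x)\,dx=0$ for all test fields $g$, which forces $F_\theta=\nabla\log\pi$ for $\pi$-almost every $x$. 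As a consistency check in the same spirit, testing with a scalar $h$ gives $\tfrac{d}{dt}\mathbb{E}[h(x_t^\theta)]=\mathbb{E}[\langle\nabla h(x_t^\theta),\mathbb{E}[v_t^\theta\mid x_t^\theta]\rangle]=0$, i.e. the location marginal stays equal to $\pi$ throughout $[0,T)$; this also clarifies that~\cref{eq:cond_vel_zero_condition} is genuinely a statement about $\pi$-almost every $x$, matching item~1.

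The main obstacle is entirely in the analytic bookkeeping for $3\Rightarrow 1$: one needs well-posedness and $C^1$ time-dependence of the flow $\varphi_t^\theta$ on $[0,T)$, a justification of the interchange of $\tfrac{d}{dt}$ and $\mathbb{E}$ in the computation of $I'(0)$ (a dominated-convergence argument exploiting the Gaussian tails of $v_0$ together with a mild growth bound on $F_\theta$), and the integration by parts (which needs $\pi\in C^1$ with sufficient decay, sidestepped here by using compactly supported $g$). These should be collected as standing smoothness and integrability assumptions on $\pi$ and $F_\theta$; everything else — the probabilistic identities and the calculus-of-variations conclusion — is routine once those are in place.
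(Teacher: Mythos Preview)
Your proof is correct and follows the same cycle $1\Rightarrow 2\Rightarrow 3\Rightarrow 1$ as the paper, with $1\Rightarrow 2$ and $2\Rightarrow 3$ handled identically. The difference lies in $3\Rightarrow 1$. The paper picks the single test function $f(x,v)=v^{\top}(F_\theta(x)-\nabla\log\pi(x))$, observes that condition~(3) forces $\mathbb{E}[f(\varphi_t^\theta(z))]\equiv 0$, and then \emph{subtracts} from this the time-derivative at $t=0$ of $\mathbb{E}[f(\varphi_t(z))]$ along the true Hamiltonian flow (which also vanishes by~(2)); the subtraction cancels the $v^{\top}(DF_\theta+\nabla^2 U)v$ term and leaves $\mathbb{E}_{x\sim\pi}[\|F_\theta(x)-\nabla\log\pi(x)\|^2]=0$ directly. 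You instead keep a generic compactly supported $g$, evaluate the $v_0^{\top}Dg(x_0)v_0$ term via $\mathbb{E}[v_0v_0^{\top}]=\mathbf{I}_d$ to get $\nabla\cdot g$, and then use integration by parts (Stein's identity) to produce $\nabla\log\pi$. Your route is arguably more transparent and avoids invoking the true score flow as a comparison object; the paper's route avoids integration by parts and delivers the $L^2$ identity in one shot, at the cost of using a test function that is not compactly supported (so its own integrability caveats are implicit). Either way the analytic hypotheses you flag --- well-posedness of the flow, dominated convergence for $I'(0)$, and $C^1$ regularity of $\pi$ --- are exactly the standing assumptions both arguments need.
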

A proof can be found in \cref{appendix:proof_main_theorem} and is based on the fact that test functions linear in $v$ have vanishing expectation if \cref{eq:cond_vel_zero_condition} holds.
We note that the equivalence of conditions (1) and (2) is well-known in statistical physics. 



\subsection{Hamiltonian Score Discrepancy}

\label{sec:disc}
Condition (3) in \cref{theorem:scorecharacterization} naturally motivates a new way of training $F_\theta$ to approximate a score. Specifically, our goal is \textbf{train the force field $F_\theta$ such that its optimal velocity predictor is zero.} By \cref{theorem:scorecharacterization}, it necessarily holds $F_\theta=\nabla\log\pi$ in this case. Unfortunately, such a bilevel optimization is not tractable with stochastic gradient descent in general, as it contains two different objectives.

However, a simple trick allows us to convert the above into a single objective. For this, we define the \emph{Hamiltonian Score Matching} loss:
\begin{align}
    \label{eq:hsm_general}
    L_{\text{hsm}}(\phi|\theta,t)=&\mathbb{E}_{z\sim\pi_{BG}}\left[\|V_\phi(x_t^\theta,t)\|^2-2V_\phi(x_t^\theta,t)^Tv_t^\theta\right]
    =L_{\text{V}}(\phi|\theta,t)-C(\theta,t)
\end{align}
where $C(\theta,t)=\mathbb{E}[\|v_t^\theta\|^2]$. As the value of $C(\theta,t)$ is a constant in $\phi$, it holds that \textbf{the optimal velocity predictor is also the unique minimizer of the Hamiltonian Score Matching loss} $L_{\text{hsm}}(\phi|\theta,t)$. However, while the argmin is the same, the actual obtained minimum value is drastically different as the next proposition shows.
\begin{proposition}
\label{prop:max_hsm_loss}
For a sufficiently rich class of functions $(V_\phi)_{\phi\in I}$, it holds that
\begin{align}
\mathbb{D}_{\text{hsm}}(\theta|t,\pi):=-\min\limits_{\phi\in I}L_{\text{hsm}}(\phi|\theta,t)=\mathbb{E}_{z\sim\pi_{BG}}[\|\mathbb{E}[v_t^\theta|x_t^\theta]\|^2]
\end{align}
\end{proposition}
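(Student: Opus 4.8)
The plan is to expand the quadratic in $L_{\text{hsm}}(\phi\mid\theta,t)$, complete the square with respect to the conditional expectation, and then read off the minimum. First I would note that, as observed just before the statement of Proposition~\ref{prop:max_hsm_loss}, the map $\phi\mapsto L_{\text{hsm}}(\phi\mid\theta,t)$ differs from the velocity-prediction loss $L_{\text{V}}(\phi\mid\theta,t)$ only by the constant $C(\theta,t)=\mathbb{E}[\|v_t^\theta\|^2]$, so the two losses share the same minimizer, namely the HVP $V_{\phi^*}(x,t)=\mathbb{E}[v_t^\theta\mid x_t^\theta=x]$ from \cref{eq:ovp_equation} (this is where "sufficiently rich class" is used, exactly as in the derivation of \cref{eq:ovp_equation} in \cref{appendix:proof_ovp_equation}).

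Next I would compute the minimum value directly. Substituting $\phi=\phi^*$ into $L_{\text{hsm}}$ and writing $w_t := \mathbb{E}[v_t^\theta\mid x_t^\theta]$ for brevity, we get
\begin{align*}
L_{\text{hsm}}(\phi^*\mid\theta,t)
&=\mathbb{E}_{z\sim\pi_{BG}}\!\left[\|w_t\|^2-2\,w_t^{T}v_t^\theta\right].
\end{align*}
Since $w_t$ is $\sigma(x_t^\theta)$-measurable, the tower property gives $\mathbb{E}[w_t^{T}v_t^\theta]=\mathbb{E}[w_t^{T}\,\mathbb{E}[v_t^\theta\mid x_t^\theta]]=\mathbb{E}[\|w_t\|^2]$, so the cross term equals $2\mathbb{E}[\|w_t\|^2]$ and hence $L_{\text{hsm}}(\phi^*\mid\theta,t)=-\mathbb{E}[\|w_t\|^2]$. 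Negating yields $\mathbb{D}_{\text{hsm}}(\theta\mid t,\pi)=\mathbb{E}_{z\sim\pi_{BG}}[\|\mathbb{E}[v_t^\theta\mid x_t^\theta]\|^2]$, as claimed. Equivalently, one can argue via the bias–variance-style decomposition $L_{\text{V}}(\phi\mid\theta,t)=\mathbb{E}[\|v_t^\theta-w_t\|^2]+\mathbb{E}[\|w_t-V_\phi(x_t^\theta,t)\|^2]$, whose minimum over $\phi$ is $\mathbb{E}[\|v_t^\theta-w_t\|^2]=C(\theta,t)-\mathbb{E}[\|w_t\|^2]$; subtracting $C(\theta,t)$ and negating gives the same answer.

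There is no serious obstacle here — the only point requiring a word of care is the interchange of minimization and the use of \cref{eq:ovp_equation}, i.e. that the infimum over the function class $(V_\phi)_{\phi\in I}$ is actually attained at the conditional expectation; this is precisely the "sufficiently rich class" hypothesis and is handled exactly as in \cref{appendix:proof_ovp_equation}. One should also implicitly assume the relevant second moments $\mathbb{E}[\|v_t^\theta\|^2]$ are finite so that $C(\theta,t)$ and all expectations above are well-defined, which holds since $v_t^\theta$ is obtained from a (locally) smooth flow applied to $\pi_{BG}$, whose velocity marginal is Gaussian.
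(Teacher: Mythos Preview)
Your proof is correct and follows essentially the same route as the paper: identify the minimizer of $L_{\text{hsm}}$ with the HVP via the constant offset $C(\theta,t)$, then substitute $V_{\phi^*}(x,t)=\mathbb{E}[v_t^\theta\mid x_t^\theta=x]$ into $L_{\text{hsm}}$ and simplify the cross term using the tower property. Your additional bias--variance decomposition is a nice alternative phrasing but not needed.
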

The proof relies on plugging the identity of $V_{\phi^*}$ (see \cref{eq:ovp_equation}) into \cref{eq:hsm_general}) and can be found in \cref{appendix:proof_max_hsmloss}. By condition (3) in \cref{theorem:scorecharacterization} (see \cref{eq:cond_vel_zero_condition}), we want to minimize $D_{\text{hsm}}(\theta|t,\pi)$ in order to learn scores. For this, let's define a distribution $\lambda$ with full support over $[0,T)$ for $T\in\mathbb{R}_{>0}\cup\{\infty\}$). With this, we define the \textbf{Hamiltonian score discrepancy (HSD)} as
\begin{align}
\label{eq:hsd_definition}
    \mathbb{D}_{\text{hsm}}(\theta|\pi)=\mathbb{E}_{t\sim \lambda}\left[\mathbb{D}_{\text{hsm}}(\theta|t,\pi)\right]
\end{align}
Note that the discrepancy is defined for an arbitrary (regular) vector field $F_\theta$ - not restricted to scores of probability distributions. By \cref{theorem:scorecharacterization}, the discrepancy fulfills all properties that we would expect from a discrepancy to hold: $D(\theta|\pi)\geq 0$ for all $\theta$ and $D(\theta|\pi)=0$ if and only if $F_\theta=\nabla\log\pi$. We summarize the findings in the below theorem.

\begin{theorem}
\label{theorem:hsd_minimization}
Minimization of the Hamiltonian score discrepancy results in learning the score $\nabla\log\pi$:
\begin{align}
\theta^{*}=\argmin\limits_\theta\mathbb{D}_{\text{hsm}}(\theta|\pi)
\Rightarrow s_{\theta^{*}} = \nabla\log\pi
\end{align}
\end{theorem}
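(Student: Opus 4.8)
The plan is to reduce the claim to the two results already established: Proposition 1, which identifies $\mathbb{D}_{\text{hsm}}(\theta|t,\pi) = \mathbb{E}_{z\sim\pi_{BG}}[\|\mathbb{E}[v_t^\theta|x_t^\theta]\|^2]$ as the negative of the minimal HSM loss, and Theorem 1, which characterizes the score as the unique force field whose conditional velocity vanishes. The argument is essentially a soft-minimum argument: an average of nonnegative quantities against a fully-supported weight $\lambda$ is zero if and only if each quantity is zero $\lambda$-a.e., and by continuity in $t$, everywhere on $[0,T)$.

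First I would observe that $\mathbb{D}_{\text{hsm}}(\theta|t,\pi) \geq 0$ for every $\theta$ and every $t$, since it is the expectation of a squared norm. Next, I would exhibit a value of $\theta$ (or rather of the force field) achieving $\mathbb{D}_{\text{hsm}}(\theta|\pi) = 0$: take $F_\theta = \nabla\log\pi$, i.e. $s_\theta = \nabla\log\pi$. By the implication $(1)\Rightarrow(3)$ of Theorem 1, the conditional velocity $\mathbb{E}[v_t^\theta(z)|x_t^\theta(z)]$ is zero for all $0 \leq t < T$ when $z\sim\pi_{BG}$, hence $\mathbb{D}_{\text{hsm}}(\theta|t,\pi) = 0$ for all such $t$, and integrating against $\lambda$ gives $\mathbb{D}_{\text{hsm}}(\theta|\pi) = 0$. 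Therefore the infimum over $\theta$ is zero and any minimizer $\theta^*$ satisfies $\mathbb{D}_{\text{hsm}}(\theta^*|\pi) = 0$.

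Then I would unwind what $\mathbb{D}_{\text{hsm}}(\theta^*|\pi) = 0$ means. Since $\lambda$ has full support on $[0,T)$ and the integrand $t\mapsto\mathbb{D}_{\text{hsm}}(\theta^*|t,\pi)$ is nonnegative, we get $\mathbb{D}_{\text{hsm}}(\theta^*|t,\pi) = 0$ for $\lambda$-almost every $t$, and in particular on a dense subset of $[0,T)$; appealing to continuity of $t\mapsto x_t^{\theta^*}, v_t^{\theta^*}$ in the flow (so that the conditional expectation and its $L^2$ norm vary continuously in $t$), this extends to all $t\in[0,T)$, or at the very least holds on a nonempty time interval. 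By Proposition 1 this says $\mathbb{E}_{z\sim\pi_{BG}}[\|\mathbb{E}[v_t^{\theta^*}|x_t^{\theta^*}]\|^2] = 0$, so $\mathbb{E}[v_t^{\theta^*}(z)|x_t^{\theta^*}(z)] = 0$ for ($\pi_{BG}$-almost) every $z$ and every $t$ in that range — this is exactly condition (3) of Theorem 1. Invoking the implication $(3)\Rightarrow(1)$ of Theorem 1 then yields $F_{\theta^*}(x) = \nabla_x\log\pi(x)$ for $\pi$-almost every $x$, i.e. $s_{\theta^*} = \nabla\log\pi$, which is the claim.

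The main obstacle I anticipate is the passage from "$\mathbb{D}_{\text{hsm}}(\theta^*|t,\pi)=0$ for $\lambda$-a.e. $t$" to "for all $t\in[0,T)$" — or at least for enough $t$ that Theorem 1's hypothesis (which, as stated, quantifies over all $0\leq t<T$) is met. This requires either a regularity/continuity argument in $t$ for the law of $(x_t^{\theta^*}, v_t^{\theta^*})$ under the PH-ODE flow, or a check that Theorem 1 in fact only needs condition (3) to hold on a set of $t$ of positive measure (plausibly the case, since its proof — per the remark after Theorem 1 — extracts the score from infinitesimal-time behavior via test functions linear in $v$). A secondary subtlety is the "sufficiently rich class $(V_\phi)_{\phi\in I}$" assumption inherited from Proposition 1, needed so that the identity $\mathbb{D}_{\text{hsm}}(\theta|t,\pi) = \mathbb{E}_{z\sim\pi_{BG}}[\|\mathbb{E}[v_t^\theta|x_t^\theta]\|^2]$ is exact rather than an inequality; this should simply be carried along as a standing hypothesis. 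Everything else is bookkeeping with nonnegative integrands.
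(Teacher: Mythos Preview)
Your proposal is correct and follows essentially the same route as the paper: use Proposition~\ref{prop:max_hsm_loss} to write $\mathbb{D}_{\text{hsm}}(\theta|\pi)$ as a $\lambda$-average of nonnegative terms, argue that the minimum value is zero (attained at $F_\theta=\nabla\log\pi$), then use full support of $\lambda$ to force the integrand to vanish and invoke Theorem~\ref{theorem:scorecharacterization}. You are in fact more explicit than the paper's short proof about exhibiting the minimizer and about the a.e.-to-everywhere step in $t$; the paper handles the latter only in a post-proof remark via the same continuity-in-$t$ argument you sketch, and likewise carries the ``sufficiently rich class'' caveat from Proposition~\ref{prop:max_hsm_loss}.
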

The full proof is stated in \cref{appendix:proof_hsd_minimization}. The Hamiltonian score discrepancy gives a natural measure of how far a vector field is from the desired score vector field. However, at first, it seems rather abstract. The following proposition shows that minimizing this measure has a very intuitive interpretation. In fact, it is closely connected to the explicit score matching loss $L_{\text{esm}}$ (see \cref{eq:esm_loss}).
\begin{proposition}[Taylor approximation of HSM loss]
\label{prop:taylor_approximation}
There exists an error term $\epsilon(t)$ such that
\begin{align}
\label{eq:taylor_approximation}
\mathbb{D}_{\text{hsm}}(\theta|t,\pi)=2t^2L_{\text{esm}}(\theta;\pi) + \epsilon(t)
\end{align}
and $\lim\limits_{t\to 0}\frac{1}{t^2}|\epsilon(t)|\to 0$.
\end{proposition}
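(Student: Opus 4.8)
The goal is to Taylor-expand the quantity $\mathbb{D}_{\text{hsm}}(\theta|t,\pi) = \mathbb{E}_{z\sim\pi_{BG}}[\|\mathbb{E}[v_t^\theta|x_t^\theta]\|^2]$ in powers of $t$ about $t=0$ and show the leading term is $2t^2 L_{\text{esm}}(\theta;\pi)$. The starting point is the PH-ODE: with $z=(x_0,v_0)\sim\pi_{BG}$, we have $x_t^\theta = x_0 + \int_0^t v_s^\theta\,ds$ and $v_t^\theta = v_0 + \int_0^t F_\theta(x_s^\theta)\,ds$. First I would derive the short-time expansions $v_t^\theta = v_0 + t\,F_\theta(x_0) + O(t^2)$ and $x_t^\theta = x_0 + t\,v_0 + O(t^2)$, uniformly in a suitable sense (assuming $F_\theta$ is, say, $C^1$ with appropriate growth/regularity so the remainders are controlled in $L^2(\pi_{BG})$).

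The core computation is then the conditional expectation $\mathbb{E}[v_t^\theta | x_t^\theta = x]$. The plan is to use the change of variables $x_0 = x - t v_0 + O(t^2)$ and recognize that conditioning on $x_t^\theta = x$ is, to leading order in $t$, conditioning on $x_0 + t v_0 = x$. Since $v_0\sim\mathcal{N}(0,\mathbf{I}_d)$ independently of $x_0\sim\pi$, the joint density of $(x_0, v_0)$ pushed forward by $(x_0,v_0)\mapsto (x_0 + t v_0, v_0)$ makes $\mathbb{E}[v_0 \mid x_0 + t v_0 = x]$ computable: by a standard Gaussian-convolution / Tweedie-type identity, $\mathbb{E}[v_0 \mid x_0 + t v_0 = x] = t\,\nabla\log\pi_t(x)$ where $\pi_t = \pi * \mathcal{N}(0,t^2\mathbf{I}_d)$, and as $t\to 0$ this is $t\,\nabla\log\pi(x) + O(t^2)$. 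Combining with the $v_t^\theta$ expansion gives $\mathbb{E}[v_t^\theta|x_t^\theta = x] = t\,\nabla\log\pi(x) + t\,F_\theta(x) + O(t^2) = t(\nabla\log\pi(x) - (-F_\theta(x))) + O(t^2)$. I need to be careful with the sign convention: from condition (3) of \cref{theorem:scorecharacterization} we know this vanishes exactly when $F_\theta = \nabla\log\pi$, so the bracket must actually be $t(\nabla\log\pi(x) - F_\theta(x)) + O(t^2)$ — I'd track the signs through the $v_t^\theta$ contribution carefully (the $+tF_\theta(x_0)$ term from $v_t^\theta$ and the $-tv_0$ shift in $x_t^\theta$ must conspire so that only the difference survives), which is exactly the consistency check that pins down the computation.

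Squaring and taking the expectation over $z\sim\pi_{BG}$ (equivalently over $x\sim\pi$, since the integrand depends only on $x$ after conditioning) yields
\begin{align}
\mathbb{D}_{\text{hsm}}(\theta|t,\pi) = t^2\,\mathbb{E}_{x\sim\pi}\left[\|\nabla\log\pi(x) - F_\theta(x)\|^2\right] + \epsilon(t) = 2t^2 L_{\text{esm}}(\theta;\pi) + \epsilon(t),
\end{align}
using the factor $\tfrac12$ in the definition of $L_{\text{esm}}$, and collecting all higher-order remainders into $\epsilon(t)$. The final step is to verify $\epsilon(t)/t^2 \to 0$: this requires showing the $O(t^2)$ error terms in the pointwise expansion of $\mathbb{E}[v_t^\theta|x_t^\theta=x]$ are genuinely $o(t)$ in $L^2(\pi)$ after squaring and integrating, i.e. that $\|\mathbb{E}[v_t^\theta|x_t^\theta=\cdot]\|_{L^2(\pi)}^2 - t^2\|\nabla\log\pi - F_\theta\|_{L^2(\pi)}^2 = o(t^2)$. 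Writing $\mathbb{E}[v_t^\theta|x_t^\theta = x] = t\,g(x) + r_t(x)$ with $g = \nabla\log\pi - F_\theta$ and $\|r_t\|_{L^2(\pi)} = o(t)$, expanding the square gives cross term $2t\langle g, r_t\rangle_{L^2(\pi)} = o(t^2)$ by Cauchy–Schwarz, and $\|r_t\|^2 = o(t^2)$, so indeed $\epsilon(t) = o(t^2)$.

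**Main obstacle.** The delicate part is making the short-time / conditional-expectation expansion rigorous in the $L^2(\pi)$ sense rather than merely pointwise: controlling the ODE remainder $x_t^\theta - x_0 - tv_0$ and $v_t^\theta - v_0 - tF_\theta(x_0)$ uniformly enough (needing mild regularity/growth assumptions on $F_\theta$ and on $\pi$, e.g. that $\nabla\log\pi$ exists $\pi$-a.e. with finite second moment under $\pi$), and then transferring these into an error bound on the conditional expectation $\mathbb{E}[v_t^\theta|x_t^\theta=x]$ — since conditioning is not a continuous operation in general, this step needs either an explicit density computation (Bayes' rule with the Gaussian velocity marginal, which is the cleanest route) or a careful argument that the conditional law varies smoothly in $t$. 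I expect the cleanest writeup fixes the Gaussian structure from the outset and does the Bayes computation explicitly, treating the $F_\theta$-driven drift perturbatively.
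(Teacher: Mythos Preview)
Your proposal reaches the right conclusion but by a genuinely different route than the paper. You expand the ODE solution $(x_t^\theta,v_t^\theta)$ directly and invoke Tweedie's identity to evaluate $\mathbb{E}[v_0\mid x_0+tv_0=x]$, then add the drift contribution $tF_\theta$ perturbatively. The paper instead differentiates $t\mapsto\mathbb{D}_{\text{hsm}}(\theta|t,\pi)$ as a whole: it shows the value and first derivative vanish at $t=0$, then computes the second derivative by differentiating the joint density $\pi_t^\theta(x,v)$ via the Fokker--Planck equation (using that the Hamiltonian vector field is divergence-free), which yields $\frac{d}{dt}\mathbb{E}[v_t^\theta|x_t^\theta=x]\big|_{t=0}=F_\theta(x)-\nabla\log\pi(x)$ directly at the density level.

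Two remarks on your version. First, your Tweedie step has a sign slip: with $v_0\sim\mathcal{N}(0,\mathbf{I}_d)$ independent of $x_0\sim\pi$ one gets $\mathbb{E}[v_0\mid x_0+tv_0=x]=-t\nabla\log\pi_t(x)$, not $+t\nabla\log\pi_t(x)$; combined with $+tF_\theta$ this gives $t(F_\theta(x)-\nabla\log\pi(x))$, which squares to the same thing, so your final formula is unaffected. Second, the obstacle you flag---that the conditioning event $\{x_t^\theta=x\}$ differs from $\{x_0+tv_0=x\}$ by $O(t^2)$, and conditioning is not continuous---is precisely what the paper's density-level computation sidesteps: by writing $\mathbb{E}[v_t^\theta|x_t^\theta=x]=\int v\,\pi_t^\theta(x,v)/\pi_t^\theta(x)\,dv$ and differentiating numerator and denominator in $t$ using Fokker--Planck, no approximation of the conditioning event is needed. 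Your closing suggestion to ``do the Bayes computation explicitly'' is essentially this, and would converge to the paper's argument; the Tweedie shortcut buys intuition but makes the remainder control harder than necessary.
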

A proof can be found in \cref{appendix:proof_taylor_approx}. Intuitively, minimizing the Hamiltonian score discrepancy, therefore, consists of pushing the parabola in \cref{eq:taylor_approximation} down onto the x-axis. The above theorem also indicates the optimal choice of $T$: one should choose $T$ high enough to have a loss value high enough to give signal but low enough to avoid errors due to numerical integration of the ODE.

\subsection{Hamiltonian Score Matching}
\label{subsec:hsm}
Beyond its theoretical value, we can explicitly minimize the HSD, a method we coin \emph{Hamiltonian Score Matching} (HSM). To minimize the HSD, two networks $V_\phi$ and $F_\theta$ can jointly optimize \cref{eq:hsm_general}. There are two difficulties coming along with this: (1) One has to simulate trajectories. This can be done via Neural ODEs \citep{chen2018neural} with constant memory. (2) One has to run a min-max optimization.
Here, a big toolbox developed for GANs for training stabilization can be used \citep{miyato2018spectral,gulrajani2017improved}.  On the other hand, we hypothesize that HSM has two advantages: (1) every trajectory of HSM gives several points of supervision effectively augmenting our data and (2) we can learn the original ("unnoised") data distribution $\pi$. However, please note that we do \emph{not} propose Hamiltonian Score Matching as a replacement for denoising score matching in diffusion models. Rather, it is a scalable alternative to score matching methods that learn the original ("unnoised") data distribution $\pi$.

\section{Hamiltonian Generative Flows}
\label{sec:hgfs}
Next, we show that training a general velocity predictor of a Hamiltonian ODE is useful even if $F_\theta\neq \nabla\log \pi$. This leads to a generative model that we coin \textbf{Hamiltonian Generative Flows (HGFs)}. As $F_\theta=F$ is fixed and not trained here, we write $(x_t,v_t)=(x_t^\theta,v_t^\theta)$ for the solution of the PH-ODE. Let us denote $\Pi(x,v,t)$ as the distribution of $(x_t,v_t)$ at time $t$ and the \textbf{location marginal}
\begin{align}
    \int\Pi(x,v,t)dv = \pi(x,t)
\end{align}
The location marginal describes a probability flow starting from our data distribution $\pi=\pi(\cdot,0)$. It turns out that the optimal velocity predictor is exactly the vector field that generates $\pi(x,t)$.

\begin{proposition}
\label{proposition:hgf_proposition}
Let $\pi_{T}$ be the distribution such that $x_T^\theta\sim\pi_{T}$. Let $V_\phi^{*}$ be the Hamiltonian Velocity Predictor (see \cref{eq:ovp_equation}). Then by sampling $x_T\sim \pi_{T}$ and running the \textbf{velocity predictor ODE} 
\begin{align}
\label{eq:vel_pred_ode}
\frac{d}{dt}x(t)=V_{\phi^*}(x,t) \Rightarrow x(0)\sim \pi
\end{align}
backwards in time, we will have $x(0)\sim \pi$, i.e. we can sample from the data distribution $\pi$. More specifically, the optimal velocity predictor $V_{\phi^{*}}$ generates the probability path $\pi(\cdot,t)$.
\end{proposition}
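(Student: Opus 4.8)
The plan is to show that the location marginal $\pi(\cdot,t)$ satisfies the continuity equation with velocity field $V_{\phi^*}(\cdot,t)=\expv{v_t\mid x_t=\cdot}$, and then invoke the standard equivalence between the continuity equation and the flow of the associated ODE. First I would write down the joint density $\Pi(x,v,t)$ of $(x_t,v_t)$. Since $(x_t,v_t)$ solves the PH-ODE $\dot x=v$, $\dot v=F(x,t)$, which is a (time-dependent) smooth ODE with divergence of the vector field $(v,F(x,t))$ equal to $\nabla_v\cdot v+\nabla_x\cdot F = 0$ in the $v$-components plus $\nabla_x\cdot v=0$... more carefully, the full phase-space divergence is $\nabla_x\cdot v + \nabla_v\cdot F(x,t)=0$, so the flow is volume-preserving (Liouville) and $\Pi$ satisfies the Liouville/transport equation
\begin{align}
\partial_t \Pi(x,v,t) = -\nabla_x\cdot\big(v\,\Pi(x,v,t)\big) - \nabla_v\cdot\big(F(x,t)\,\Pi(x,v,t)\big).
\end{align}

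Next I would integrate this equation over $v\in\Rd$. The term $\nabla_v\cdot(F(x,t)\Pi)$ integrates to zero (divergence in $v$ of something decaying at infinity, by the divergence theorem). The remaining terms give
\begin{align}
\partial_t \pi(x,t) = -\nabla_x\cdot\Big(\int v\,\Pi(x,v,t)\,dv\Big) = -\nabla_x\cdot\big(\pi(x,t)\,\expv{v_t\mid x_t=x}\big),
\end{align}
where the last equality is just the definition of conditional expectation, $\int v\,\Pi(x,v,t)\,dv = \pi(x,t)\,\expv{v_t\mid x_t=x}$. By \cref{eq:ovp_equation} this is exactly $\nabla_x\cdot(\pi(x,t)V_{\phi^*}(x,t))$, so $\pi(\cdot,t)$ solves the continuity equation driven by $V_{\phi^*}$. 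Therefore $V_{\phi^*}$ generates the probability path $\pi(\cdot,t)$; running the ODE $\dot x = V_{\phi^*}(x,t)$ transports $\pi(\cdot,0)=\pi$ to $\pi(\cdot,T)=\pi_T$ forward in time, and hence running it backward from $x_T\sim\pi_T$ yields $x(0)\sim\pi$, which is the claim.

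The main obstacle is rigor/regularity rather than conceptual difficulty: justifying that $\Pi(\cdot,\cdot,t)$ has a density smooth and decaying enough to (i) make sense of the Liouville equation in the classical (or at least weak/distributional) sense, (ii) allow differentiation under the integral sign and the vanishing of the $\nabla_v$ flux term when integrating over $v$, and (iii) guarantee existence/uniqueness of solutions to the velocity-predictor ODE so that "running it backward recovers $\pi$" is well-posed. I would handle this either by assuming $F$ and $\pi$ are smooth with suitable growth/decay conditions (standard in this literature), or by phrasing everything weakly: test the Liouville equation against functions of the form $g(x)$ (independent of $v$), which immediately kills the $v$-divergence term and produces the weak form of the continuity equation for $\pi(\cdot,t)$ with velocity $V_{\phi^*}$; then appeal to the standard fact (e.g.\ the superposition principle / Ambrosio–DiPerna–Lions theory, or simply classical characteristics under smoothness) that this weak continuity equation is solved by the pushforward of $\pi$ under the flow of $V_{\phi^*}$. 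A cleaner self-contained alternative avoiding PDE machinery: for any test function $g$, compute $\frac{d}{dt}\expv{g(x_t)} = \expv{\nabla g(x_t)\cdot \dot x_t} = \expv{\nabla g(x_t)\cdot v_t} = \expv{\nabla g(x_t)\cdot \expv{v_t\mid x_t}} = \expv{\nabla g(x_t)\cdot V_{\phi^*}(x_t,t)}$ using the tower property, which is exactly the weak formulation stating that $\pi(\cdot,t)$ is the law of the $V_{\phi^*}$-flow started at $\pi$; this is the version I would actually write up.
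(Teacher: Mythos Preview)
Your proposal is correct and follows essentially the same route as the paper: write the Liouville/Fokker-Planck equation for the joint density $\Pi(x,v,t)$ using that the Hamiltonian vector field $(v,F(x,t))$ is divergence-free, integrate out $v$ so that the $\nabla_v$-term vanishes, and identify the remaining expression as the continuity equation $\partial_t\pi=-\nabla_x\cdot(\pi\,V_{\phi^*})$, whence running the ODE backward from $\pi_T$ yields $\pi$. Your preferred test-function/tower-property version is simply the weak formulation of the same computation and is a perfectly acceptable (and arguably cleaner with respect to regularity) way to present it; the paper writes the classical/strong form without dwelling on the regularity caveats you raise.
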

The proof uses the fact that the vector field $G(x,v)=(v,F_{\theta}(x,t))$ is divergence-free to show that $V_{\phi^{*}}$ fulfils the deterministic Fokker-Planck equation (see \cref{appendix:proof_hgf_proposition}). The above proposition allows us to build a generative model by training an HVP. We coin this model \emph{Hamiltonian Generative Flows (HGFs)}. To make this framework tractable, we need two criteria to be fulfilled:
\begin{enumerate}
    \item \textbf{(C1) Forward ODE efficiently computed: }For efficient training, we need to be able to compute $x_t^\theta,v_t^\theta$ efficiently - either via an analytical expression or ODE solvers.
    \item \textbf{(C2) Initial distribution should be approximately known: } In order to be able to sample the initial point of the ODE faithfully, we need to (approximately) know $\pi(x,T)$.
\end{enumerate}

\begin{figure}[ht]
    \centering
    \begin{subfigure}[b]{0.3\textwidth}
    \centering    \includegraphics[width=\textwidth]{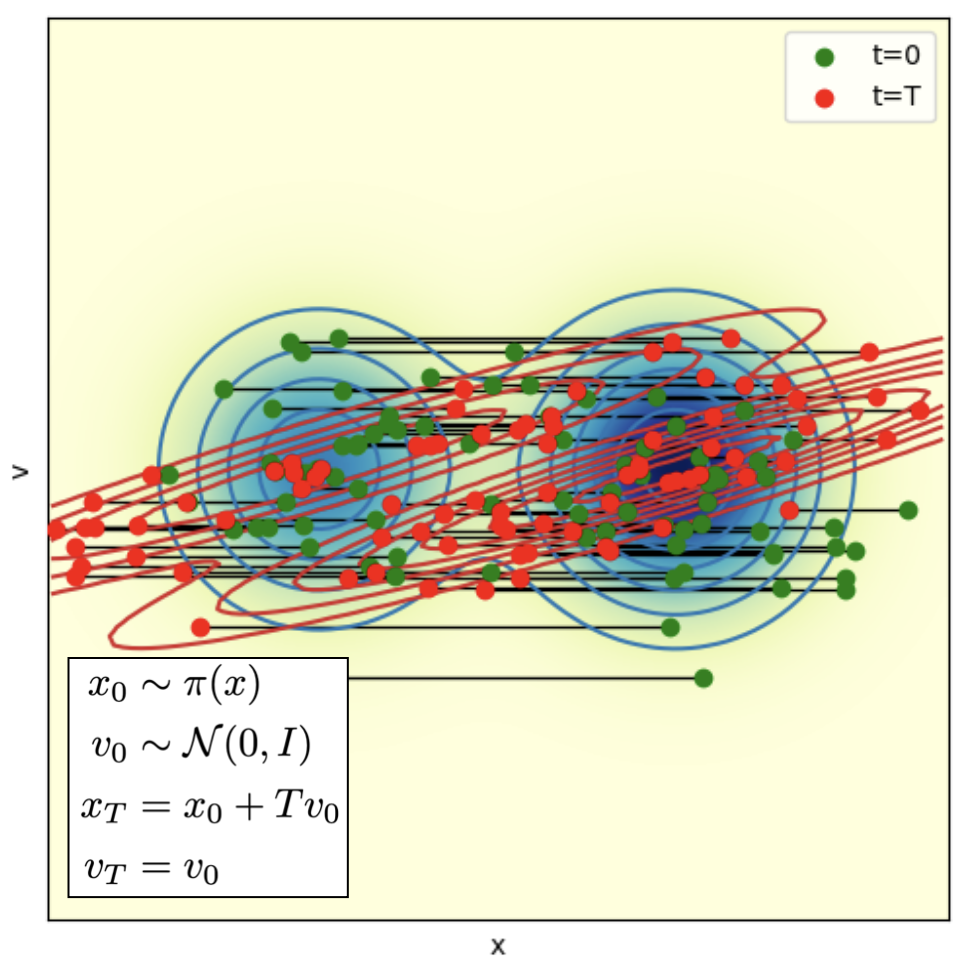} 
        \caption{Diffusion
\label{subfig:diffusion_phase_space}}
    \end{subfigure}
    \hfill 
    \begin{subfigure}[b]{0.3\textwidth}
        \centering
\includegraphics[width=1\textwidth]{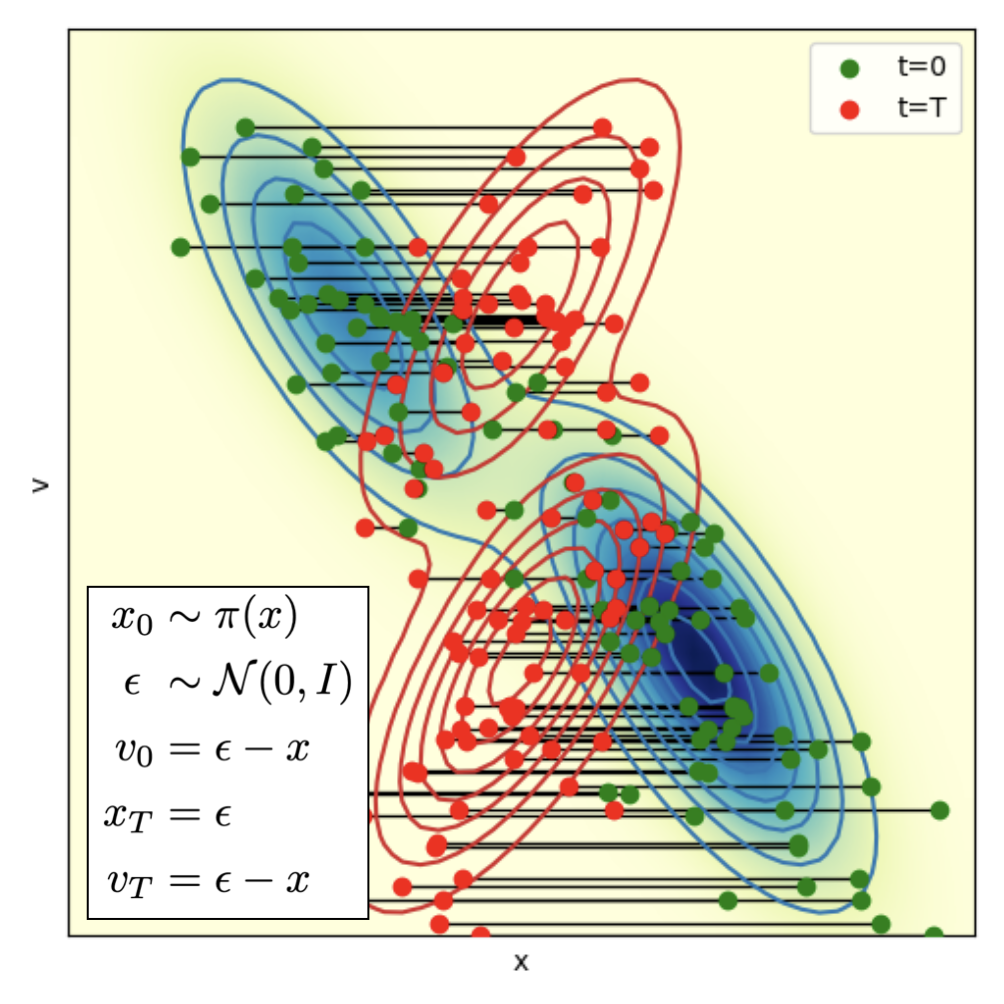}
        \caption{Flow matching}\label{subfig:taylor_approximation}
    \end{subfigure}
    \hfill 
    \begin{subfigure}[b]{0.3\textwidth}
        \centering
\includegraphics[width=1\textwidth]{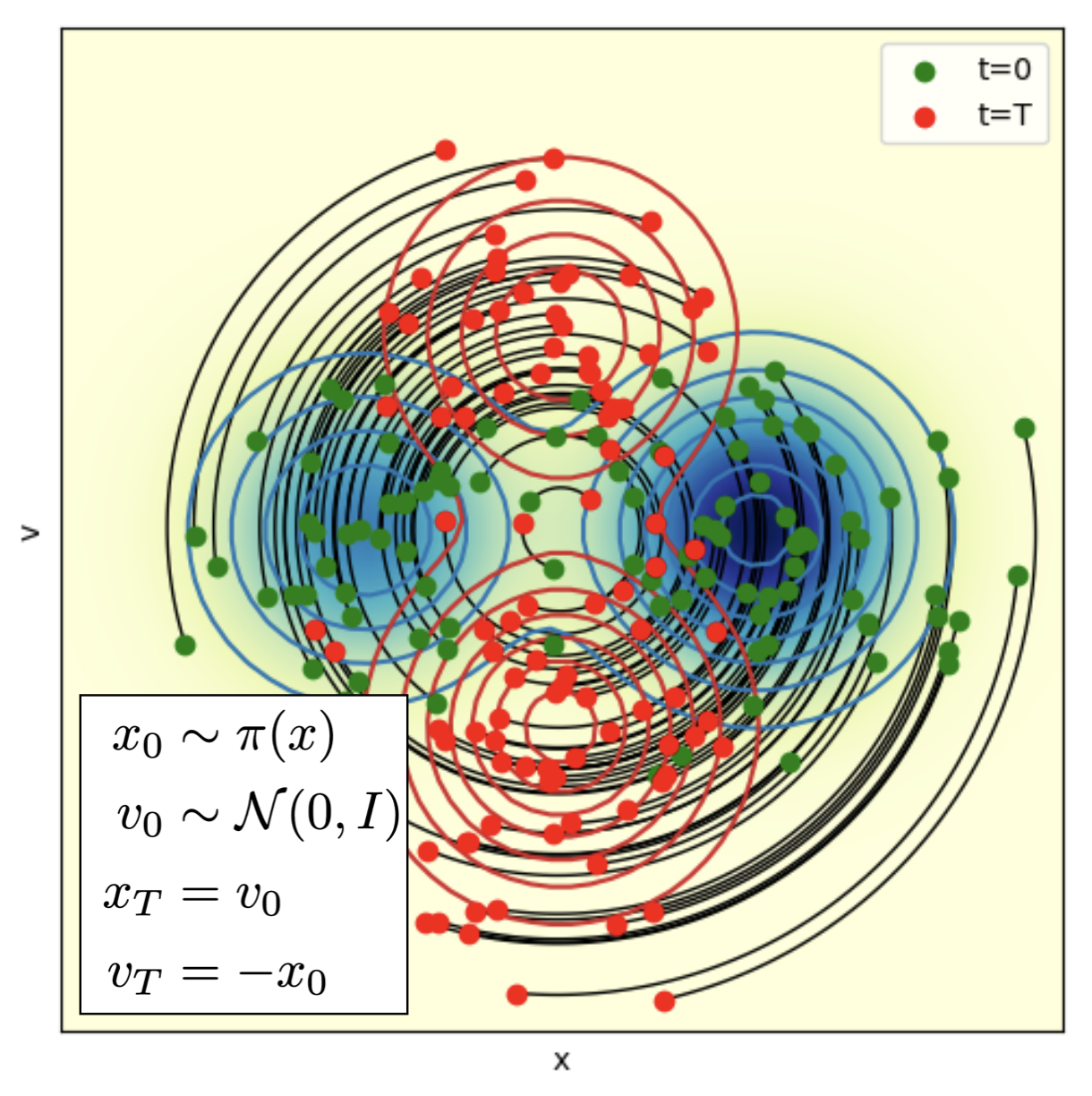}
        \caption{Oscillation HGFs.}
\label{fig:hsm_signal_to_noise}
    \end{subfigure}
    \caption{Evolution of various HGFs in joint coordinate-velocity space  from $t=0$ (blue) to $t=T$ (red) with trajectories (black). Data distribution $\pi(x)=0.4 * \mathcal{N}(-2,1)+ 0.6* \mathcal{N}(2,1)$. Diffusion models and flow matching have zero force fields, i.e. the velocity does not change. Diffusion models do not converge in finite time (here, $T=3$). The coupled distribution in FM allow for a convergence for $T=1$. Both distort the joint distribution. Oscillation HGFs only rotate the distribution.}
    \label{fig:phase_space}
\end{figure}
\vspace{-0.5cm}
\section{Diffusion Models and Flow Matching as HGFs with zero force field}
\label{sec:dm_and_fm}
\subsection{Diffusion Models as HGFs}
We can recover diffusion models with a variance-preserving (VP-SDE) noising process \citep{song2020score} as a special case of HGFs. If we simply set $\Pi=\pi_{BG}$ and $F_\theta(x)=0$ - no force applied. In this case, we get that $x_t=x+tv$ and $v_t=v$ leading to the training objective:
\begin{align*}
\mathbb{E}_{x\sim\pi,v\sim\mathcal{N}(0,\mathbf{I}_d)}[\|V_\phi(x_t,t)-v_t\|^2]
=&\mathbb{E}_{x\sim\pi, \epsilon\sim\mathcal{N}(0,\mathbf{I}_d)}[\|V_\phi(x+t\epsilon,t)-\epsilon\|^2]
\end{align*}
which equals denoising score matching (\cref{eq:dsm}) with score model $-tV_\phi(x,t)=\nabla\log\pi_{\sigma(t)}$ with $\sigma(t)=t$. In this case, Hamiltonian Generative Flows correspond to training a diffusion model and the velocity predictor corresponds to a \emph{denoising network} (often denoted as $\epsilon_\theta$ in DDPMs \citep{ho2020denoising}). It then holds  $x_T\sim\pi_{T}\approx\mathcal{N}(0,\sigma^2(t)\mathbf{I}_d)$ and the velocity predictor ODE then reduces to the well-known probability flow ODE formulation of diffusion models with noise schedule $\sigma(t)=t$:
\begin{align*}
x_T\sim\pi_{T}\approx\mathcal{N}(0,\sigma^2(t)\mathbf{I}_d),\quad \frac{d}{dt}x(t) =& -\dot{\sigma}(t)\sigma(t)\nabla_{x}\log \pi_{\sigma(t)}(x)
=V(x(t),t)
\end{align*}
In fact, the above is a universal way of modeling diffusion models \citep{karras2022elucidating}. In other words, \textbf{diffusion models are a special case of HGFs for the zero-force field}. In this perspective, different diffusion models correspond to different time rescaling and preconditioning of the network. The location marginals $\pi(x,t)$ only fully converge to a Gaussian in the limit of $t\to\infty$ (see \cref{fig:phase_space}).

\subsection{Flow Matching as HGFs}
Flow matching with the CondOT probability path \citep{lipman2022flow}, a current state-of-the-art generative model, can be easily considered an HGF model. As in diffusion, let us consider the zero force field $F_\theta$ and let's consider a coupled initial distribution $\Pi$:
\begin{align}
    x\sim\pi,\quad v=\epsilon-x,\quad \epsilon\sim\mathcal{N}(0,\mathbf{I}_d)
\end{align}
Similarly, the velocity prediction loss corresponds to the OT-flow matching loss:
\begin{align}
\mathbb{E}_{x\sim\pi(x),\epsilon\sim\mathcal{N}(0,\mathbf{I}_d)}[\|V_\phi((1-t)x+t\epsilon,t)-(\epsilon-x)\|^2]
\end{align}
and flow model corresponds to the velocity predictor ODE. Therefore, \textbf{diffusion models and OT-flow matching are both HGFs with the zero force field} - the difference lies in a coupled construction of the initial distribution (see \cref{fig:phase_space}). The coupled construction allows OT-flow matching to have straighter probability paths, leading to improved generation quality for the same number of steps \citep{lipman2022flow}.

\section{Oscillation  HGFs}
\label{sec:ohgfs}


So far, we studied optimal velocity predictors $V_\phi$ for two extreme cases: either $F_\theta=0$ or $F_\theta=\nabla\log\pi$. Finally, we want to investigate a different choice of $F_\theta$ to construct HGFs. Here, we study \emph{Oscillation HGFs} that correspond to a natural extension. In \cref{appendix:reflection_hgfs}, we give another example that we coin \emph{Reflection HGFs}.

A simple design of a force field is to use the physical model of a harmonic oscillator, i.e. to set  $F_\theta(x)=-\alpha^2x$ with $\Pi=\pi_{BG}$ and $\alpha>0$. The flow of the ODE then becomes:
\begin{align}
    (x_t,v_t)&=
    \begin{pmatrix}
    \cos(\alpha t)x+\frac{1}{\alpha}\sin(\alpha t)v,
    -\alpha\sin(\alpha t)x+\cos(\alpha t)v
    \end{pmatrix}
\end{align}
I.e. condition (C1) is fulfilled as we can simply compute the ODE analytically. Setting $T=\pi/(2\alpha)$, it holds that $(x_t,v_t)=(v,-\alpha x)$. In particular, $\pi_{T}=\mathcal{N}(0,
\mathbf{I}_d/\alpha^2)$ - condition (C1) is easily fulfilled. Therefore, the above choice gives us a natural generative model based on harmonic oscillators that we coin \emph{Oscillation HGFs}. To summarize, they have the following simple training objective:
\begin{align}
\mathbb{E}_{x\sim\pi,v\sim\mathcal{N}(0,\mathbf{I}_d)}[\|V_\phi(\cos(\alpha t)x+\frac{\sin(\alpha t)}{\alpha}v,t)-[-\alpha \sin(\alpha t)x+\cos(\alpha t)v]\|^2]
\end{align}
A natural choice for $\alpha$ is to set $\alpha=\sqrt{d/\mathbb{E}_{x\sim\pi}[\|x\|^2]}$. With this, the scale of the $n$-th derivative (including $n=0$) of the inputs and outputs in the training objective is constant in time (see \cref{fig:phase_space}), i.e. for all $t=0$:
\begin{align}
\mathbb{E}_{x\sim\pi,v\sim\mathcal{N}(0,\mathbf{I}_d)}
\left[\|\frac{d^n}{d^nt}x_t\|^2\right]&=\alpha^{n-2}d,\quad \mathbb{E}_{x\sim\pi,v\sim\mathcal{N}(0,\mathbf{I}_d)}
\left[\|\frac{d^n}{d^n t}v_t\|^2\right]=\alpha^n d
\label{eq:constant}
\end{align}
In the context of critically-damped Langevin diffusion \citep{dockhorn2021score}, it was already observed that a constant scale in velocity space leads to improved training and better performance. Here, we extend this idea of a constant scale from the velocity to the $n$-th derivative.

\section{Related Work}

\paragraph{Assessing and training energy-based models.} 
Stein's discrepancy \citep{gorham2017measuring} is a well-known measure to assess the quality of energy-based models based on Stein's identity \citep{stein1972bound}. Based on this metric, \citep{grathwohl2020learning} developed a method that is similar to ours where a critic is optimized to assess the quality of an energy-based model via Stein's discrepancy and jointly trained with the energy model via min-max optimization. \cite{hyvarinen2005estimation} introduced score matching as a method by showing that the explicit score matching loss (see \cref{eq:esm_loss})  can be implicitly trained if one computes the trace of Hessian of the energy function - an expensive step. To expedite this, \cite{Martens2012EstimatingTH} introduced curvature propagation for an unbiased Hessian estimate, while \cite{song2020sliced} used Hutchinson’s Trick to estimate the trace. In practice, both methods suffer from high variance due to their underlying Monte Carlo estimators. 

\paragraph{Flow matching and Stochastic interpolants.} As already seen for a special case in \cref{sec:dm_and_fm}, HGFs are strongly connected to Flow Matching \citep{lipman2022flow} and Stochastic Interpolants~\citep{albergo2023stochastic}. They construct probability paths that fulfill the continuity equation by predicting derivatives of flows (i.e. velocities) in the same way how in this work, we predict velocities as marginals of an extended state space. The differences of these 3 works lie in the design perspective: HGFs consider 2nd-order ODEs in an extended state space $\mathbb{R}^d\times\mathbb{R}^d$ with a simple \emph{initial velocity distribution} (here, $\mathcal{N}(0,\mathbf{I}_d)$), while FM considers 1st-order ODE paths in $\mathbb{R}^d$ converging to a simple \emph{final location distribution}. Flow matching conditions on final states (usually at $t=1$), while our framework conditions on the velocity of the current state (arbitrary $t$) and is centered around forces. This work arrives at the ideas of conditional velocity predictors via the search of properties that are conserved under Hamiltonian dynamics (see \cref{theorem:scorecharacterization}). We elucidate the mathematical connection in more detail in \cref{appendix:fm_and_hgfs}.

\paragraph{Generative models and Hamiltonian physics.}
V-Flows also consider augmenting the state space with velocities deriving an ELBO objective for CNFs \citep{chen2020vflow}. \citep{dockhorn2021score} extend diffusion models to joint state-velocity samples that converge to a joint normal distribution. One difference is that we only need to run the backward equation in state space $\mathbb{R}^d$ as opposed to extended state-velocity space $\mathbb{R}^d\times\mathbb{R}^d$. Though rather unmotivated, Oscillation HGFs could, in principle, also be derived as an EDM model with preconditioning  \citep{karras2022elucidating}. Finally, several works have, like this, explored generative models based on specific physical processes, e.g. Poisson flow generative models~\citep{Xu2022PoissonFG, Xu2023PFGMUT}. A few works also combined Hamiltonian physics with deep learning. For example, \citep{greydanus2019hamiltonian} use conservation of energy as an implicit bias to learn networks for physical data. Conversely, deep learning was also used to accelerate HMC sampling \citep{foreman2021deep}, e.g. by training custom MCMC kernels \citep{levy2017generalizing} or correct for complex geometries via flows \citep{hoffman2019neutra}. Very recently, score matching approaches were also designed to leverage existing force fields as part of a diffusion model that samples from an energy landscape \citep{akhound2024iterated,de2024target}.

\paragraph{Acceleration Generative Model (AGM).} The AGM model \citep{chen2023generativeagm} also uses constructions in phase space (joint position and velocity space) and 2nd order ODEs.  While AGM focuses on learning the force field, our approach primarily focuses on learning the optimal velocity predictor. While we also consider optimizing the force field by minimizing the norm of the optimal velocity predictor, this happens in the “outer loop” of the maximization - the inner loop optimizes the optimal velocity predictor. Further, ATM focuses on bridging two desired distributions by posing a stochastic bridge problem in phase space. We do not consider the problem of bridging distributions. In contrast, our framework centers around energy preservation and divergence from that preservation (for optimal velocity predictors that are not zero). Specifically, we establish a connection to Hamiltonian physics and a property of the preservation of energy. This allows us to introduce a further bi-level optimization and the possibility of joint training for score matching.

\vspace{-0.2cm}
\section{Experiments}
\subsection{Hamiltonian Score Discrepancy}
As we introduced Hamiltonian score discrepancy as a novel score-matching metric, we first empirically investigate our theoretical insights on Gaussian mixtures (see     \cref{fig:hsm_validation}). As one can see in \cref{subfig:hsd_valid}, the Hamiltonian score discrepancy is highly correlated with the explicit score matching loss. Further, we can validate empirically that the Taylor approximation derived in \cref{prop:taylor_approximation} is pretty accurate for large $t$ (see \cref{subfig:taylor_approximation}). Overall, these results indicate that the Hamiltonian score discrepancy is a natural metric to assess score approximations.

Further, we investigate whether explicitly minimizing the Hamiltonian score discrepancy leads to accurate score approximations. We jointly train velocity predictors and score networks as described in \cref{sec:hsm}. As one can see visually in \cref{fig:hsm_validation}, this approach can faithfully learn score vector fields. In addition, we investigate the signal-to-noise ratio for gradient estimation. As shown in \cref{fig:hsm_signal_to_noise}, the gradient estimates of HSM have significantly lower variance compared to denoising score matching at lower noise levels $\sigma$. The reason for that is that we allow for supervision across a full trajectory at locations for the same data points - effectively acting as data augmentation. Of course, this comes at the expense of simulating the Hamiltonian trajectories for $\sim~5$ steps.

\begin{figure}[ht]
    \centering
    \begin{subfigure}[b]{0.3\textwidth}
    \centering    \includegraphics[width=\textwidth]{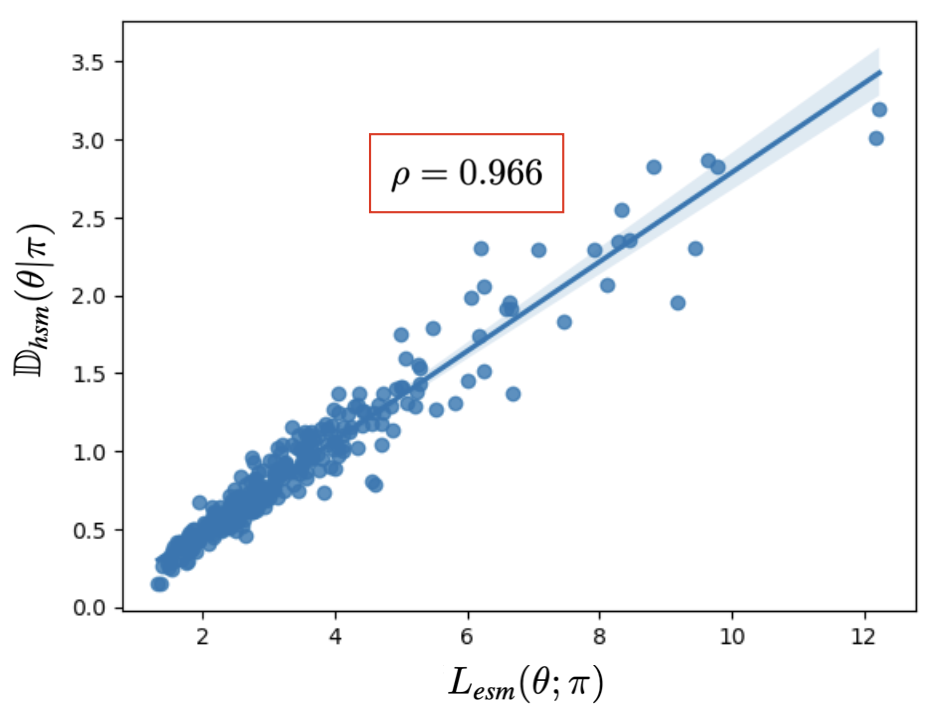} 
        \caption{ESM loss vs HSD for networks trained for 1 epoch
        \label{subfig:hsd_valid}}
    \end{subfigure}
    \hfill 
    \begin{subfigure}[b]{0.3\textwidth}
        \centering
\includegraphics[width=1\textwidth]{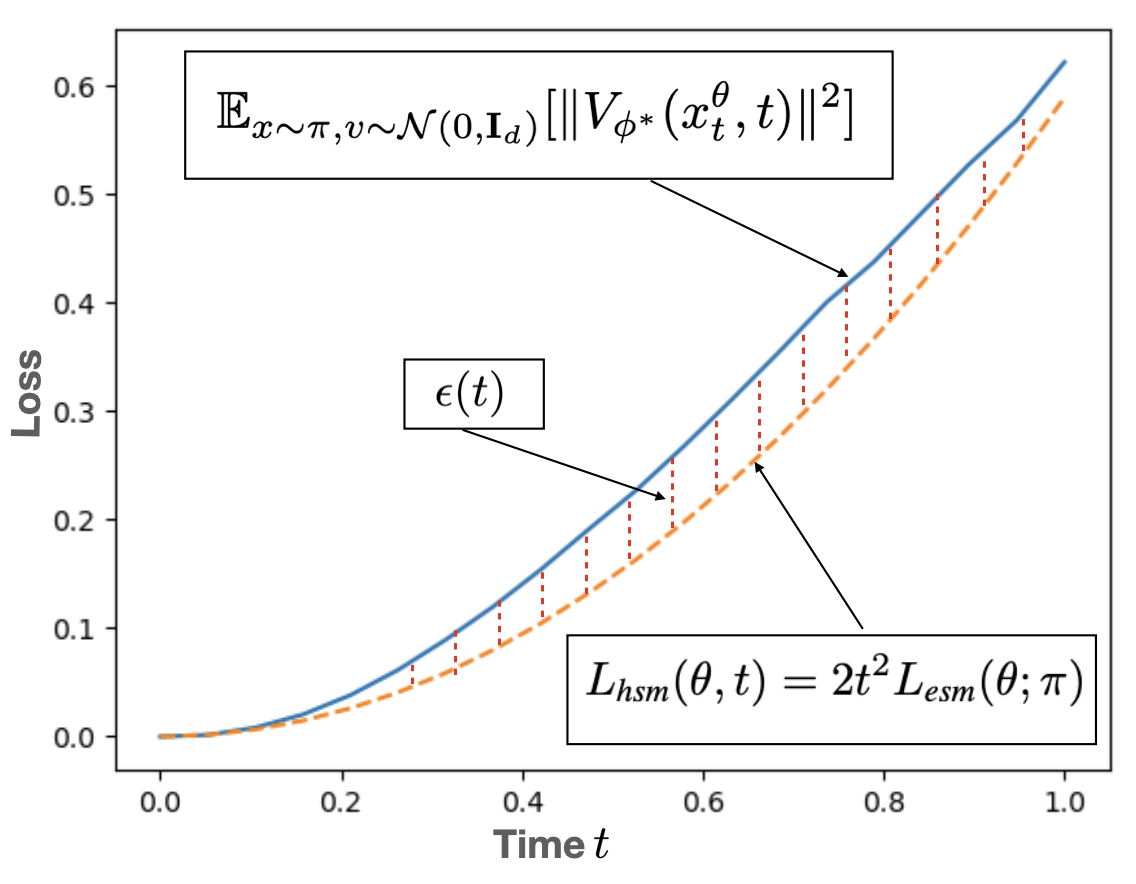}
        \caption{Empirical HSD vs. Taylor approximation (see \cref{prop:taylor_approximation})}
    \label{subfig:taylor_approximation}
    \end{subfigure}
    \hfill 
    \begin{subfigure}[b]{0.3\textwidth}
        \centering
\includegraphics[width=1\textwidth]{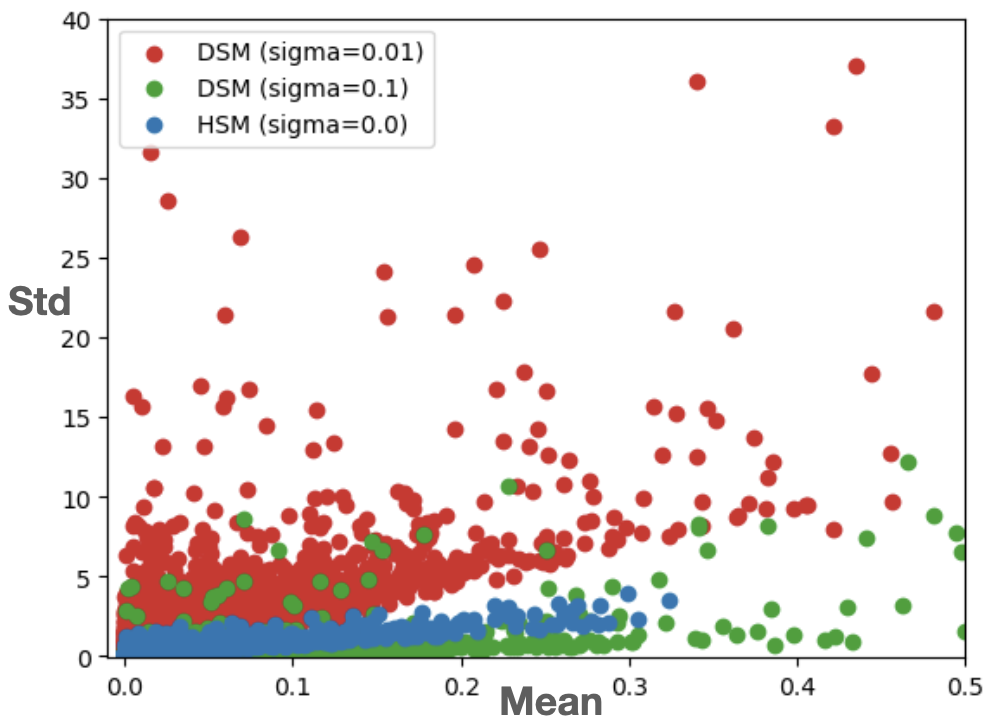}
        \caption{Std vs absolute mean of derivative of param. of score network.}
\label{fig:hsm_signal_to_noise}
    \end{subfigure}
    \caption{Empirical investigation of Hamiltonian score discrepancy (HSD). (a) The Taylor approximation is a good approximation. (b) Hamiltonian score discrepancy is strongly correlated with explicit score matching loss. (c) Signal-to-noise ratio is significantly better for HSM vs DSM for low $\sigma$.}
    \label{fig:main}
\end{figure}

\begin{figure}[ht]
    \centering
\includegraphics[width=0.9\textwidth]{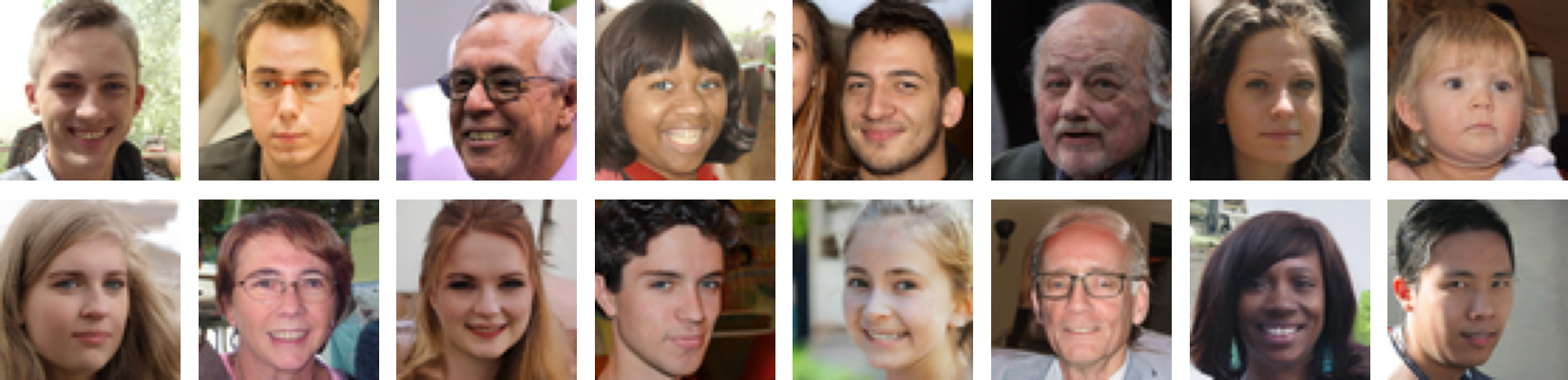}
    \label{subfig:taylor_approximation}
    \caption{Image generation examples based on Oscillation HGFs for FFHQ.}
    \label{fig:image_examples}
\end{figure}
\subsection{HGF experiments - Image Generation}
In the form of diffusion models and flow matching, HGFs have already been extensively optimized and achieved state-of-the-art results. Instead, we investigate whether also other non-zero force fields, specifically Oscillation HGFs, can lead to generative models of high quality. We focus on image generation benchmarks.
\begin{wraptable}{r}{0.5\textwidth}
    \centering  
    \scriptsize
    \caption{{Sample quality~(FID) and number of function evaluation~(NFE).}}
    \begin{tabular}{l l c}
    \specialrule{.05em}{.1em}{.15em} 
        \vspace{2pt}
    METHOD  &FID $\downarrow$ & NFE $\downarrow$ \\
{\scriptsize\textit{\textbf{CIFAR-10 (unconditional)-32x32}}} & & \\
    \specialrule{.08em}{.01em}{.15em} 
    StyleGAN2-ADA~\citep{Karras2020TrainingGA} & $2.92$ & $1$\\
    DDPM~\citep{ho2020denoising} & $3.17$&$1000$\\
LSGM~\citep{Vahdat2021ScorebasedGM} & $2.10$ & $147$\\
    PFGM~\citep{Xu2022PoissonFG}  & $2.48$ & $104$\\
    VE-SDE~\citep{song2020score} & $3.77$ & $35$\\
    VP-SDE~\citep{song2020score} & $3.01$ & $35$\\
EDM~\citep{karras2022elucidating} & $1.98$ & $35$\\
FM-OT (BNS) \citep{shaul2023bespoke} 
& $2.73$ & $8$ \\
   Oscillation HGF (ours) & $2.12$ & $35$ \vspace{2pt}\\
{\scriptsize\textit{\textbf{CIFAR-10 (class conditional)-32x32}}} & & \\
    \specialrule{.08em}{.01em}{.15em} 
    VE-SDE~\citep{song2020score} & $3.11$ & $35$\\
    VP-SDE~\citep{song2020score} & $2.48$ & $35$\\
EDM~\citep{karras2022elucidating} & $1.79$ & $35$\\
 Oscillation HGF~(ours) & $1.97$ & $35$  \\
    {\scriptsize\textit{\textbf{FFHQ (unconditional)-64x64}}} & & \\
    \specialrule{.08em}{.01em}{.15em} 
    VE-SDE~\citep{song2020score} & $25.95$ & $79$\\
    VP-SDE~\citep{song2020score} & $3.39$ & $79$\\
EDM~\citep{karras2022elucidating} & $2.39$ & $79$\\
        Oscillation HGF~(ours) & $2.86$ & $79$ \\
        \specialrule{.05em}{.1em}{.15em} 
    \end{tabular}
    \label{tab:cifar10}
    \vspace{-0.5cm}
\end{wraptable}

Specifically, we train a Oscillation HGF on CIFAR-10 unconditional and conditional. As two central benchmarks, we use the original SDE formulation of diffusion models \citep{song2020score} as well as the EDM framework \citep{karras2022elucidating}, a highly-tuned optimization of diffusion models. Our hypothesis is that Oscillation HGFs should work well out-of-the-box, as the scale of their inputs and outputs stay around constant~(\textit{c.f.} Eq.~\ref{eq:constant}). Therefore, we remove any preconditioning optimized for diffusion models (scaling of inputs and outputs and skip connections) \citep{karras2022elucidating} and train on the default DDPM architecture \citep{ho2020denoising} (see details in \cref{appendix:image_generation_benchmark}). Our results are encouraging: without  hyperparameter tuning, Oscillations HGFs can sample high-quality images and surpass most previous methods (see \cref{tab:cifar10}) measured by Frechet Inception Distance (FID) \citep{heusel2017gans}. While they still lack behind the EDM model, this difference might well be explained by the fact that architectures and hyperparameters have been optimized for diffusion models over several works that are hard to replicate.

To investigate whether similar results can be achieved similar performance at higher resolutions, we perform another benchmark on the FFHQ dataset at 64x64 resolution. Here, our results are similar: Oscillation HGFs improve upon original diffusion models with a small performance margin to the EDM model. They can generate high-quality faces that appear realistic (see \cref{fig:image_examples}).

\section{Conclusion}
Our work systematically elucidates the synergy between Hamiltonian dynamics, force fields, and generative models - extending and giving a new perspective on many known generative models.  We believe that this opens up new avenues for applications of machine learning in physical sciences and dynamical systems. However, several limitations remain. Minimizing the Hamiltonian Score Discrepancy (HSD) via a default min-max algorithm is scalable but requires adversarial optimization. Future work can focus on adapting the HSD framework, e.g. to develop \emph{denoising} Hamiltonian score matching that could allow for guaranteed convergence. For HGFs, the extended design space presented has only been explored for data without known force fields (image generation, here). Future work can focus on specific applications that require domain-specific force fields, e.g. for molecular data. Further adaptions might be required in such settings as such data often lie on manifolds. A further challenge is that HGFs not necessarily converge to a known distribution for more complex force fields. Therefore, we anticipate that future work will focus on adapting HGFs and related models to this challenge to design domain-specific models.

\begin{ack}
This work was funded in part by GIST-MIT Research Collaboration grant (funded by GIST), the Machine Learning for Pharmaceutical Discovery and Synthesis (MLPDS) consortium, the DTRA Discovery of Medical Countermeasures Against New and Emerging (DOMANE) threats program, and the NSF Expeditions grant (award 1918839) Understanding the World Through Code.

P.H. would like to thank Gabriele Corso, Ziming Liu, and Timur Garipov for helpful discussions during early stages of the work.
\end{ack}

\medskip
\bibliography{hsm}
\bibliographystyle{abbrv}

\newpage
\appendix
\section{Score matching methods}
\label{appendix:sm_overview}

To make score matching tractable, one can express the explicit score matching loss via \citep{hyvarinen2005estimation}
\begin{align}
\label{eq:ism}
 L_{\text{ism}}(\theta;\pi) = 
 \mathbb{E}_{x\sim\pi}
 \left[
\nabla\cdot F_\theta(x)+\frac{1}{2}\|F_\theta(x)\|^2\right] = L_{\text{esm}}(\theta;\pi)+C,
\end{align}
where $\nabla\cdot F_\theta$ is the divergence of the vector field and the constant $C$ is independent of $\theta$. While this loss has been widely used \citep{koster2009estimating}, this objective is hard to optimize with neural networks as the divergence requires to backpropagate $d$ times. Still, the divergence can be approximated via Hutchinson's trick \citep{hutchinson1989stochastic} leading to \emph{sliced score-matching} \citep{song2020sliced}.

\section{Hamiltonian ODE: Conservation of energy and volume}
In this section, we prove the fundamental properties of the flow $\varphi_t$ of the Hamiltonian ODE (see \cref{eq:true_ham_ode}). As these properties are used throughout this work and usually presented in the context of the physics literature, we include the proofs here for completeness, solely expressing it in the language of probability theory. Throughout this section, let $J\in\mathbb{R}^{2d\times 2d}$ be the matrix defined by:
\begin{align*}
	J = \begin{pmatrix}
		0 & \mathbf{1}_d \\
		 -\mathbf{1}_d & 0
	\end{pmatrix}
\end{align*}
\subsection{Preservation of energy.}
\label{appendix:proof_energy_conservation}
	\begin{proposition}
	\label{penergy}
	For all $t\in\Reals$ it holds that $H\circ\varphi_{t}=H$
\end{proposition}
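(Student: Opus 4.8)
The plan is to show that the Hamiltonian $H$ is constant along solutions of the ODE by differentiating $t \mapsto H(\varphi_t(z))$ and checking that the derivative vanishes identically. This is the standard energy-conservation computation, and the only subtlety is bookkeeping with the symplectic structure.

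First I would fix a starting point $z=(x_0,v_0)$ and write $\varphi_t(z)=(x(t),v(t))$ for the solution of \cref{eq:true_ham_ode}, so that $\dot x(t)=v(t)$ and $\dot v(t)=-\nabla U(x(t))$. Then I would compute, using the chain rule,
\begin{align*}
\frac{d}{dt}H(x(t),v(t))
&= \nabla_x H(x(t),v(t))\cdot \dot x(t) + \nabla_v H(x(t),v(t))\cdot \dot v(t)\\
&= \nabla U(x(t))\cdot v(t) + v(t)\cdot(-\nabla U(x(t)))\\
&= 0,
\end{align*}
where I used $\nabla_x H = \nabla U(x)$ and $\nabla_v H = v$ from the definition $H(x,v)=U(x)+\tfrac12\|v\|^2$. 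Equivalently, in the compact symplectic notation, $\nabla H\cdot(\tfrac{d}{dt}(x,v)) = \nabla H\cdot (J\nabla H) = 0$ since $J$ is skew-symmetric; I would mention this formulation since the matrix $J$ has just been introduced, but the coordinate computation above is fully self-contained.

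Since the derivative of $t\mapsto H(\varphi_t(z))$ is zero for all $t$ in the interval of existence, the function is constant, so $H(\varphi_t(z)) = H(\varphi_0(z)) = H(z)$ for all $t$. As $z$ was arbitrary, this gives $H\circ\varphi_t = H$, which is the claim. The argument requires only that $U$ be $C^1$ (so that the ODE has well-defined classical solutions and the chain rule applies); I would note this regularity assumption is implicit throughout. There is no real obstacle here — the "hard part," such as it is, is merely making sure the gradient of the kinetic term is identified correctly and that one does not drop a sign when substituting $\dot v = -\nabla U(x)$; everything else is immediate.
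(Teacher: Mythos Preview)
Your argument is correct and is essentially identical to the paper's proof: the paper also differentiates $t\mapsto H(\varphi_t(z))$, writes the derivative as $\langle \nabla H(\varphi_t(z)), J\nabla H(\varphi_t(z))\rangle$, and uses skew-symmetry of $J$ to conclude it vanishes. Your coordinate-wise expansion is just the same computation unpacked.
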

\begin{proof}
	We follow \citep[theorem 2.2]{bou2018geometric}. For all $z\in\Rdd$ one has $\eukl{z}{Jz}=0$. Hence
	\[\frac{d}{dt}H(\varphi_{t}(z))=\eukl{\nabla H(\varphi_{t}(z)) }{\frac{d}{dt}{\varphi_{t}(z)}}=\eukl{\nabla H(\varphi_{t}(z)) }{J\nabla H(\varphi_{t}(z))}=0
	\]
	which implies that $H(\varphi_{t}(z))=H(\varphi_{0}(z))=H(z)$.
\end{proof}
\subsection{Preservation of volume.}
\label{appendix:proof_volume_conservation}
\begin{proposition}[Symplectic property]
	\label{pvolume}
	For all $z=(x,v)\in\Rdd$ and $t\in\Reals$ the Jacobian $D\varphi_{t}(z)$ satisfies the following equation:
	\begin{equation}
		\label{symplectic}
		D\varphi_{t}(z)^{T}J^{T}D\varphi_{t}(z)=J^{T}
	\end{equation}
	In particular, $|\text{det}D\varphi_{t}|\equiv 1$, i.e. the $\varphi_t$ is volume-preserving.
\end{proposition}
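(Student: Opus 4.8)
The plan is to derive \cref{symplectic} from the variational (first-variation) equation of the flow, exploiting the algebraic structure $J^{T}=-J$, $J^{-1}=-J$ of the symplectic matrix together with the symmetry of the Hessian $\Hess H$. First I would rewrite the Hamiltonian ODE \cref{eq:true_ham_ode} in the compact form $\frac{d}{dt}\varphi_{t}(z)=J\,\nabla H(\varphi_{t}(z))$; this is merely a restatement, since $\nabla H(x,v)=(\nabla U(x),\,v)$ and $J(\nabla U(x),v)=(v,\,-\nabla U(x))$, which is exactly the right-hand side of \cref{eq:true_ham_ode}. Assuming $U=-\log\pi$ is $C^{2}$ so that $\nabla H$ is $C^{1}$ (this also yields local existence and uniqueness; energy conservation from \cref{penergy} together with mild coercivity of $U$ rules out finite-time blow-up), the flow $\varphi_{t}$ is $C^{1}$ in $z$ and its Jacobian $M(t):=D\varphi_{t}(z)$ solves the linearization of the ODE in the initial condition:
\begin{equation*}
\tfrac{d}{dt}M(t)=J\,\Hess H(\varphi_{t}(z))\,M(t),\qquad M(0)=\mathbf{1}_{2d}.
\end{equation*}

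The core step is to show that $W(t):=M(t)^{T}J^{T}M(t)$ is constant in $t$. Writing $S(t):=\Hess H(\varphi_{t}(z))=S(t)^{T}$, so that $\dot M=JSM$ and hence $\dot M^{T}=M^{T}SJ^{T}$, a direct computation gives
\begin{equation*}
\tfrac{d}{dt}W(t)=\dot M^{T}J^{T}M+M^{T}J^{T}\dot M=M^{T}S(J^{T})^{2}M+M^{T}(J^{T}J)SM=-M^{T}SM+M^{T}SM=0,
\end{equation*}
where I used $(J^{T})^{2}=(-J)^{2}=J^{2}=-\mathbf{1}_{2d}$ and $J^{T}J=(-J)J=-J^{2}=\mathbf{1}_{2d}$. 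Since $W(0)=\mathbf{1}_{2d}^{T}J^{T}\mathbf{1}_{2d}=J^{T}$, we conclude $W(t)\equiv J^{T}$, which is precisely \cref{symplectic}.

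For the volume statement, taking determinants in \cref{symplectic} yields $\det(M(t))^{2}\det(J^{T})=\det(J^{T})$; since $J$ is invertible ($J^{-1}=-J$), $\det(J^{T})\neq 0$, and therefore $\det(M(t))^{2}=1$, i.e.\ $|\det D\varphi_{t}|\equiv 1$. (Alternatively one can bypass \cref{symplectic} here and argue directly from Liouville's formula: $\frac{d}{dt}\det M(t)=\Tr\!\big(J\,\Hess H(\varphi_{t}(z))\big)\det M(t)$, and $\Tr(JS)=0$ for any antisymmetric $J$ and symmetric $S$, so in fact $\det D\varphi_{t}\equiv\det D\varphi_{0}=1$.)

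I do not anticipate a genuine obstacle in the algebra itself; the only point that needs care is the regularity/global-existence preamble required to even speak of $D\varphi_{t}$ — namely smoothness of $\pi$ (or at least of $\log\pi$) and the absence of finite-time blow-up of Hamiltonian trajectories, which is exactly where energy conservation (\cref{penergy}) and mild growth assumptions on $U$ come in.
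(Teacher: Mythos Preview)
Your proof is correct and follows essentially the same route as the paper's: both differentiate the Jacobian via the variational equation $\tfrac{d}{dt}D\varphi_t=J\,\Hess H(\varphi_t)\,D\varphi_t$ and use symmetry of $\Hess H$ together with the algebra of $J$ to show the symplectic form is conserved. The only cosmetic difference is that the paper phrases the computation through the bilinear form $\beta(u,v)=\langle u,J^{T}v\rangle$ and shows $\tfrac{d}{dt}\beta(D\varphi_t u,D\varphi_t v)=0$, whereas you work directly with the matrix $W(t)=M(t)^{T}J^{T}M(t)$; the underlying cancellation is identical.
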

\begin{proof}
   Here, we follow \citep[proposition 2.6.2]{marsden2013introduction}. First of all, it can be easily seen that \cref{symplectic} is equivalent to the statement that the bilinear form $\beta(u,v)=\eukl{u}{J^{T}v}$ on $\Rdd$ fulfils
	\begin{equation}
		\beta(u,v)=\beta(D\varphi_{t}(z)u,D\varphi_{t}(z)v)	\quad \forall z,u,v\in\Rdd		
	\end{equation}
	By taking derivatives of the right-hand side and afterwards using the identities $\beta(Jx,y)=\eukl{J^{2}x}{y}=-\eukl{x}{y}$ and $\beta(x,Jy)=\eukl{x}{J^{T}Jy}=\eukl{x}{y}$, one gets:
	\begin{align*}
		\frac{d}{dt}\beta(D\varphi_{t}(z)u,D\varphi_{t}(z)v)
		=&\beta(\frac{d}{dt}D\varphi_{t}(z)u,D\varphi_{t}(z)v)
		+\beta(D\varphi_{t}(z)u,\frac{d}{dt}D\varphi_{t}(z)v)\\
		=&\beta(D\frac{d}{dt}\varphi_{t}(z)u,D\varphi_{t}(z)v)
		+\beta(D\varphi_{t}(z)u,D\frac{d}{dt}\varphi_{t}(z)v)\\
		=&\beta(J\Hess H(\varphi_{t}(z))D\varphi_{t}(z)u,D\varphi_{t}(z)v)
		+\beta(D\varphi_{t}(z)u,J \Hess H(\varphi_{t}(z))D\varphi_{t}(z)v)\\
		=&-\eukl{\Hess H(\varphi_{t}(z))D\varphi_{t}(z)u}{D\varphi_{t}(z)v}+
		\eukl{D\varphi_{t}(z)u}{\Hess H(\varphi_{t}(z))D\varphi_{t}(z)v}\\
		=&0
	\end{align*}
	where in the last step I have used that the Hessian is symmetric.
	One can conclude
	\[
	\beta(D\varphi_{t}(z)u,D\varphi_{t}(z)v)
	=\beta(D\varphi_{0}(z)u,D\varphi_{0}(z)v)
	=\beta(u,v)
	\]
	Since $|\text{det}\,J|=1$, one immediately gets $|\text{det}D\varphi_{t}|\equiv 1$.
\end{proof}
\section{Proof of \cref{theorem:scorecharacterization}}
\label{appendix:proof_main_theorem}
\subsection{(1)$\Rightarrow$ (2): Score preserves Boltzmann-Gibbs distribution}
\label{subsec:bg_preservation}
This section gives proof that the Hamiltonian ODE defined with the score preserves the Boltzmann-Gibbs distribution. This implication is well-known, and the proof is included for completeness following \citep{bou2018geometric}.

For all $t\in\Reals$ and Borel sets $D$, it holds by \cref{penergy,pvolume} and a change of variables:
\begin{align}
\BG(\varphi_{t}(D))
=&Z^{-1}\int\mathbf{1}_{\varphi_{t}(D)}\exp{(-H)dz}\\
=&Z^{-1}\int\mathbf{1}_{\varphi_{t}(D)}\circ\varphi_{t}\exp{(-H\circ\varphi_{t})|\text{det}D\varphi_{t}|dz}\\
=&Z^{-1}\int\mathbf{1}_{D}\exp(-H)|dz\\
=&\BG(D)
\end{align}

%
%

Note that the symplectic property is crucial, e.g. consider the simple pendulum ($U(x)=\frac{1}{2}x^2$) and $g(z)=(\|z\|,0)$. This function fulfills $\pi_{BG}(g(z))=\pi_{BG}(z)$ but it does not leave the distribution invariant (it is not symplectic, either).

\subsection{(2)$\Rightarrow$ (1): Invariance under Boltzmann-Gibbs uniquely defines score}
We write $\varphi_t(z)=(x_t^{\nabla U},v_t^{\nabla U})$ for the solution with force field $-\nabla U$. Let's pick an arbitrary sufficiently regular test function $f:\mathbb{R}^{2d}\to\mathbb{R}$ and let's define 
\begin{align*}
	F_{\theta}(t)&=\mathbb{E}_{z\sim\pi_{BG}}[f(\varphi_t^\theta(z))]\\
	F_{\nabla U}(t)&=\mathbb{E}_{z\sim\pi_{BG}}[f(\varphi_t(z))]
\end{align*}
As the Hamiltonian dynamics with force network $-\nabla U$ preserve the Boltzmann-Gibbs distribution, $F_{\nabla U}$ must be a constant function, i.e. it derivative is zero. Therefore, we can compute:
\begin{align*}
0 =& \frac{d}{dt} F_{\nabla U}(t) \\
=& \frac{d}{dt}\mathbb{E}_{z\sim\pi_{BG}}[f(\varphi_t(z))]\\
=&\mathbb{E}_{z\sim\pi_{BG}}\left[\frac{d}{dt}f(\varphi_t(z))\right]\\
=&\mathbb{E}_{z\sim\pi_{BG}}\left[\nabla_{z}f(\varphi_t(z)^T
\frac{d}{dt}\varphi_t(z)\right]\\
=&\mathbb{E}_{z\sim\pi_{BG}}\left[\nabla_{z}f(\varphi_t(z))^T
\begin{pmatrix}
v_t^{\nabla U}(z)\\
-\nabla U(x_t^{\nabla U}(z))\\
\end{pmatrix}
\right]\\
\end{align*}
I.e. at time $t=0$:
\begin{align}
\label{eq:test_function_derivative_nablaU}
0 =& \frac{d}{dt} F_{\nabla U}(t)_{|t=0}=
\mathbb{E}_{z=(x,v)\sim\pi_{BG}}\left[\nabla_{z}f(z)^T
\begin{pmatrix}
	v\\
	-\nabla U(x)
\end{pmatrix}\right]
\end{align}

As $\varphi_t^\theta$ also preserves the Boltzmann-Gibbs distribution, we can follow the same computation to also get that:
\begin{align}
	\label{eq:test_function_derivative_g}
	0 =& \frac{d}{dt} F_{\theta}(t)_{|t=0}=
	\mathbb{E}_{z=(x,v)\sim\pi_{BG}}\left[\nabla_{z}f(z)^T
	\begin{pmatrix}
		v\\
		F_\theta(x)
	\end{pmatrix}\right]
\end{align}
Substracting \cref{eq:test_function_derivative_nablaU} from 	\cref{eq:test_function_derivative_g}, we get that:
\begin{align}
	\label{eq:diff_derivatives_at_zero}
	0 = \mathbb{E}_{z=(x,v)\sim\pi_{BG}}\left[\nabla_{z}f(z)^T
	\begin{pmatrix}
		0\\
		F_\theta(x)+\nabla U(x)
	\end{pmatrix}\right]
\end{align}
Let's set $f(z)=f(x,v)=v^T(F_\theta(x)+\nabla U(x))$. Then 
\begin{align}
\label{eq:grad_of_selected_test_function}
\nabla_{z}f(x,v)=\begin{pmatrix}
	v^T(DF_\theta(x)+\nabla^2U(x))\\
	F_\theta(x)+\nabla U(x)
	\end{pmatrix}
\end{align}
where $DF_\theta$ denotes the Jacobian and $\nabla^2U$ the Hessian. Then inserting \cref{eq:grad_of_selected_test_function} into \cref{eq:diff_derivatives_at_zero} we get that:
\begin{align*}
0 =& \mathbb{E}_{z=(x,v)\sim\pi_{BG}}\left[\begin{pmatrix}
	v^T(DF_\theta(x)+\nabla^2U(x))\\
	F_\theta(x)+\nabla U(x)
\end{pmatrix}^T
\begin{pmatrix}
	0\\
	F_\theta(x)+\nabla U(x)
\end{pmatrix}\right]\\
=&
\mathbb{E}_{z=(x,v)\sim\pi_{BG}}\left[\|
	F_\theta(x)+\nabla U(x)\|^2\right]\\
=&
\mathbb{E}_{x\sim\pi}\left[\|
F_\theta(x)-[-\nabla U(x)]\|^2\right]
\end{align*}
This implies that $F_\theta(x)=-\nabla U(x)=\nabla\log\pi$ for $\pi$-almost every $x$.

\subsection{(3)$\Leftrightarrow$ (1)}
Finally, we show that condition (3) is equivalent to condition (1). Note that if $(x_t^\theta,v_t^\theta)\sim\pi_{BG}=\pi\otimes\mathcal{N}(0,\mathbf{I}_d)$, then $v_t^\theta$ is independent of $x_t^\theta$ and it holds  that $\mathbb{E}[v_t^\theta|x_t^\theta]=0$. Therefore, condition (2) trivially implies condition (3). As we have already have shown that condition (2) is equivalent to condition (1), it is sufficient to prove that (3) implies (1). The proof is similar to the proof for (2)$\Rightarrow$(1).

Let's assume that the following condition holds for 
$z\sim\pi_{BG}$
\begin{align}	\label{eq:cond_g_preservation_appendix}
	0=\mathbb{E}[v_t^\theta|x_t^\theta=x] \quad\text{for all }x
\end{align}
where for brevity we write $(x_t^\theta,v_t^\theta)=\varphi_t^\theta(z)$. Then for $f(z)=f(x,v)=v^T(F_\theta(x)+\nabla U(x))$ as selected above, we can equally derive:
\begin{align*}
	F_{\theta}(t)
	=& \mathbb{E}_{z\sim\pi_{BG}}[f(\varphi_t^{\theta}(z))]\\
	=& \mathbb{E}[f(x_t^\theta,v_t^\theta)]\\
	=& \mathbb{E}[(v_t^\theta)^T(F_\theta(x_t^\theta)+\nabla U(x_t^\theta))
	]\\
	=&\mathbb{E}[\mathbb{E}[(v_t^\theta)^T(F_\theta(x)+\nabla U(x))|x=x_t^\theta]
]\\
=&\mathbb{E}[\mathbb{E}[v_t^\theta|x_t^\theta]^T
(F_\theta(x_t^\theta)+\nabla U(x_t^\theta))
]\\
=&\mathbb{E}\left[0^T
(F_\theta(x_t^\theta)-\nabla U(x_t^\theta))
\right]\\
=&\mathbb{E}[0
]\\
=&0
\end{align*}
In the same way as above, we can now deduce that $\frac{d}{dt}F_\theta(t)_{|t=0}=0$. Similarly to above, we can complete the proof by using \cref{eq:grad_of_selected_test_function}:
\begin{align*}
    0 =&\frac{d}{dt}\mathbb{E}_{z\sim\pi_{BG}}[f(\varphi_t^{\theta}(z))]-\frac{d}{dt}\mathbb{E}_{z\sim\pi_{BG}}[f(\varphi_t(z))]\\
    =&\mathbb{E}_{z=(x,v)\sim\pi_{BG}}\left[\nabla_{z}f(z)^T
	\begin{pmatrix}
		0\\
		F_\theta(x)+\nabla U(x)
	\end{pmatrix}\right]\\
 =&
\mathbb{E}_{x\sim\pi}\left[\|
F_\theta(x)-[-\nabla U(x)]\|^2\right]
\end{align*}
Again, we can deduce that $F_\theta(x)=-\nabla U(x)=\nabla\log\pi(x)$ for $\pi$-almost every $x$.

\section{Proof of \cref{eq:ovp_equation}}
\label{appendix:proof_ovp_equation}
We formally proof the identity in \cref{eq:ovp_equation}. We will use the following well-known characterization of the expectation:
\begin{lemma}
    \label{eq:cond_lemma}
    Let $X\in\mathbb{R}^d$ be a random variable. Then:
    \begin{align}
        \mathbb{E}[X] = \argmin\limits_{m\in\mathbb{R}^d}\mathbb{E}[\|X-m\|^2]
    \end{align}
\end{lemma}
\begin{proof}
Let $c=\mathbb{E}[X]\in\mathbb{R}^d$, then
\begin{align*}
\mathbb{E}[\|X-m\|^2]
=&\mathbb{E}[\|X-c+c-m\|^2]\\
=&\mathbb{E}[\|X-c\|^2+2(X-c)^T(X-m)+\|c-m\|^2]\\
=&\mathbb{E}[\|X-c\|^2]+2(\mathbb{E}[X]-c)^T(X-m)]+\|c-m\|^2\\
=&\mathbb{E}[\|X-c\|^2]+\|c-m\|^2\\
\geq & \mathbb{E}[\|X-c\|^2]
\end{align*}
This implies the statement.
\end{proof}
Therefore, we can apply \cref{eq:cond_lemma} conditionally on $x_t^\theta$ to see that
\begin{align*}
    V_{\phi^*}(x,t)=\mathbb{E}[v_t^\theta|x_t^\theta=x]
\end{align*}

\section{Proof of \cref{prop:max_hsm_loss}}
\label{appendix:proof_max_hsmloss}
We can derive:
\begin{align}
    \label{eq:hsm_loss_repeat}
    L_{\text{hsm}}(\phi|\theta,t)=&\mathbb{E}_{z\sim\pi_{BG}}\left[-2V_\phi(x_t^\theta,t)^Tv_t^\theta+\|V_\phi(x_t^\theta,t)\|^2\right]\\
    =&\mathbb{E}_{z\sim\pi_{BG}}\left[\|v_t^\theta\|^2-2V_\phi(x_t^\theta,t)^Tv_t^\theta+\|V_\phi(x_t^\theta,t)\|^2\right]-\mathbb{E}_{z\sim\pi_{BG}}\left[\|v_t^\theta\|^2\right]\\
=&\mathbb{E}_{z\sim\pi_{BG}}\left[\|V_\phi(x_t^\theta,t)-v_t^\theta\|^2\right]-\mathbb{E}_{z\sim\pi_{BG}}\left[\|v_t^\theta\|^2\right]\\
\end{align}
As $\mathbb{E}_{z\sim\pi_{BG}}\left[\|v_t^\theta\|^2\right]$ is a constant in $\phi$, we can see that
\begin{align*}
\argmin\limits_{\phi}L_{\text{hsm}}(\phi|\theta,t)
=\argmin\limits_{\phi}\mathbb{E}_{z\sim\pi_{BG}}[\|V_\phi(x_t^\theta,t)-v_t^\theta\|^2]
\end{align*}
Hence, the Hamiltonian velocity predictor is the minimizer of the above objective. Inserting \cref{eq:ovp_equation} into    \cref{eq:hsm_loss_repeat}, we get
\begin{align*}
\max\limits_{\phi}L_{\text{hsm}}(\phi|\theta,t)
=&L_{\text{hsm}}(\phi^*|\theta,t)\\
=&\mathbb{E}_{z\sim\pi_{BG}}\left[2\mathbb{E}[v_t^\theta|x_t^\theta]^Tv_t^\theta-\|\mathbb{E}[v_t^\theta|x_t^\theta]\|^2\right]\\
=&\mathbb{E}_{z\sim\pi_{BG}}\left[2\mathbb{E}[\mathbb{E}[v_t^\theta|x_t^\theta]^Tv_t^\theta|x_t^\theta]-\|\mathbb{E}[v_t^\theta|x_t^\theta]\|^2\right]\\
=&
\mathbb{E}_{z\sim\pi_{BG}}\left[2\mathbb{E}[v_t^\theta|x_t^\theta]^T\mathbb{E}[v_t^\theta|x_t^\theta]-\|\mathbb{E}[v_t^\theta|x_t^\theta]\|^2\right]\\
=&
\mathbb{E}_{z\sim\pi_{BG}}\left[2\|\mathbb{E}[v_t^\theta|x_t^\theta]\|^2-\|\mathbb{E}[v_t^\theta|x_t^\theta]\|^2\right]\\
=&
\mathbb{E}_{z\sim\pi_{BG}}\left[\|\mathbb{E}[v_t^\theta|x_t^\theta]\|^2\right]
\end{align*}
where have used that the conditional expectation $\mathbb{E}[v_t^\theta|x_t^\theta]$ is a constant conditioned on $x_t^\theta$. This finishes the proof.

\section{Proof of \cref{theorem:hsd_minimization}
}
\label{appendix:proof_hsd_minimization}
As $\lambda$ is a distribution with full support over $[0,T)$, it holds that
\begin{align*}
\mathbb{D}_{\text{hsm}}(\theta|\pi)
=\mathbb{E}_{t\sim\lambda,z\sim\pi_{BG}}\left[\|\mathbb{E}[v_t^\theta|x_t^\theta]\|^2\right]=0
\end{align*}
if and only if for very $0\leq t<T$ (up to measure zero)
\begin{align*}
    \mathbb{E}[v_t^\theta|x_t^\theta]=0
\end{align*}
By  \cref{theorem:scorecharacterization}, this is equivalent to $F_\theta=\nabla\log\pi$. Hence, $\mathbb{D}_{\text{hsm}}(\theta|\pi)=0$ if and only if $F_\theta=\nabla\log\pi$.
\paragraph{Remark.} Technically speaking, we maximized $V_\phi$ for a fixed $t$ in \cref{prop:max_hsm_loss}. However, we remark that maximizing it across $t$ leads to the same result for all $0\leq t<T$ under reasonable regularity conditions. More specifically, assuming that $\pi$ is a smooth density with full support, the map $(x,t)\mapsto \mathbb{E}[v_t^\theta|x_t^\theta=x]$ is continuous in $t$ and $x$. Therefore, as long as $(V_\phi)_{\phi\in I}$ covers all continuous function in $t$ and $x$, the result above is the same.

\section{Proof of \cref{prop:taylor_approximation}}
\label{appendix:proof_taylor_approx}
The proof relies on a Taylor approximation of $L_\text{hsm}(\theta,t)$ around $t=0$. To finish the proof, we have to show the following three equations:
\begin{align}
\label{eq:zero_order}
		\mathbb{D}_{\text{hsm}}(\theta|t,\pi)_{|t=0}&=0\\
\label{eq:first_order}
		\frac{d}{dt}\mathbb{D}_{\text{hsm}}(\theta|t,\pi)_{|t=0}&=0\\
\label{eq:second_order}
		\frac{d^2}{d^2t}\mathbb{D}_{\text{hsm}}(\theta|t,\pi)_{|t=0}&=4L_{\text{esm}}(\theta;\pi)=2\mathbb{E}_{x\sim \pi}[\|F_\theta(x)-\nabla\log\pi(x)\|^2]
\end{align}

\paragraph{Proof of \cref{eq:zero_order}}
Note that at time $t=0$, it holds that $v_t^\theta=v\sim\mathcal{N}(0,\mathbf{I}_d)$ and $x_t^\theta=x\sim\pi$. Therefore,
\begin{align*}
    \mathbb{E}[v_t^\theta|x_t^\theta]_{|t=0}
    =\mathbb{E}_{x\sim\pi, v\sim\mathcal{N}(0,\mathbf{I}_d}[v|x]=\mathbb{E}_{v\sim\mathcal{N}(0,\mathbf{I}_d)}[v]=0
\end{align*}
Therefore, by \cref{prop:max_hsm_loss}
\begin{align*}
    \mathbb{D}_{\text{hsm}}(\theta|0,\theta)=\mathbb{E}[\|\mathbb{E}[v_t^\theta|x_t^\theta]_{|t=0}\|^2]=
\mathbb{E}[\|0\|^2]=0
\end{align*}
\paragraph{Proof of \cref{eq:first_order}}
We can compute:
\begin{align*}
\frac{d}{dt}\mathbb{D}_{\text{hsm}}(\theta|t,\pi)
=&\mathbb{E}[\frac{d}{dt}\|\mathbb{E}[v_t^{\theta}|x_t^{\theta}]\|^2]=2\mathbb{E}[\mathbb{E}[v_t^{\theta}|x_t^{\theta}]^T\frac{d}{dt}\mathbb{E}[v_t^{\theta}|x_t^{\theta}]]
\end{align*}
So at time $t=0$:
\begin{align*}
	\frac{d}{dt}\mathbb{D}_{\text{hsm}}(\theta|t,\pi)_{|t=0}
	=&2\mathbb{E}[\mathbb{E}[v|x]^T\frac{d}{dt}\mathbb{E}[v_t^{\theta}|x_t^{\theta}]_{|t=0}]=2\mathbb{E}[0^T\frac{d}{dt}\mathbb{E}[v_t^{\theta}|x_t^{\theta}]_{|t=0}]=0
\end{align*}
\paragraph{Proof of \cref{eq:second_order}} Let's take the second derivative:
\begin{align*}
	\frac{d^2}{d^2t}\mathbb{D}_{\text{hsm}}(\theta|t,\pi)
	=&2\frac{d}{dt}\mathbb{E}[\mathbb{E}[v_t^{\theta}|x_t^{\theta}]^T\frac{d}{dt}\mathbb{E}[v_t^{\theta}|x_t^{\theta}]]
	\\
=&2\mathbb{E}[\mathbb{E}[v_t^{\theta}|x_t^{\theta}]^T\frac{d^2}{d^2t}\mathbb{E}[v_t^{\theta}|x_t^{\theta}]]+2\mathbb{E}[\|\frac{d}{dt}\mathbb{E}[v_t^{\theta}|x_t^{\theta}]\|^2]
\end{align*}
by the product rule. And at time $t=0$:
\begin{align}
\label{eq:second_derivative_main_identity}
\frac{d^2}{d^2t}\mathbb{D}_{\text{hsm}}(\theta|t,\pi)_{|t=0}
=&2\mathbb{E}[\mathbb{E}[0^T\frac{d^2}{d^2t}\mathbb{E}[v_t^{\theta}|x_t^{\theta}]]+2\mathbb{E}[\|\frac{d}{dt}\mathbb{E}[v_t^{\theta}|x_t^{\theta}]\|^2]
=2\mathbb{E}[\|\frac{d}{dt}\mathbb{E}[v_t^{\theta}|x_t^{\theta}]_{|t=0}\|^2]
\end{align}
Let $\pi_{t}^\theta:\mathbb{R}^d\times\mathbb{R}^d\to\mathbb{R}$ be the density of $(x_t^\theta,v_t^\theta)$ and compute:
\begin{align}
\mathbb{E}[v_t^{\theta}|x_t^{\theta}=x]
=&\int v \pi_t^{\theta}(v|x)dv=\int v \frac{\pi_t^{\theta}(x,v)}{\pi_t^{\theta}(x)}dv\\
\frac{d}{dt}\mathbb{E}[v_t^{\theta}|x_t^{\theta}=x]
=&\int v \frac{d}{dt}\frac{\pi_t^{\theta}(x,v)}{\pi_t^{\theta}(x)}dv\\
\label{eq:deriv_cond_exp}
=&\int v \frac{\pi_t^{\theta}(x)\frac{d}{dt}\pi_t^{\theta}(x,v)
-\pi_t^{\theta}(x,v)\frac{d}{dt}\pi_t^{\theta}(x)}{(\pi_t^{\theta}(x))^2}dv
\end{align}
Let $G(x,v)=(v,F_\theta(x))^T$ be the Hamiltonian vector field. By the deterministic Fokker-Planck equation \citep{oksendal2003stochastic}, we can derive that:
\begin{align*}
    \frac{d}{dt}\pi_t^\theta=&-\nabla\cdot[\pi_t^\theta G]
    =-G^T\nabla\pi_t^\theta-\pi_t^\theta\nabla G
\end{align*}
Note that the Jacobian of $F$ is given by
\begin{align*}
    DG(x,v)= \begin{pmatrix}
    0 & \mathbf{I}_d\\
    DF_\theta(x) & 0
    \end{pmatrix}
\end{align*}
In particular, $G$ is divergence-free, i.e. $\nabla\cdot G=\text{tr}(DG)=0$. Therefore,
\begin{align*}
    \frac{d}{dt}\pi_t^\theta
    =&-G^T\nabla\pi_t^\theta
\end{align*}
and for particular $x,v\in\mathbb{R}^d$:
\begin{align}
\label{eq:joint_density_derivative}
    \frac{d}{dt}\pi_t^\theta(x,v)_{|t=0}=&-\begin{pmatrix}
        v \\
        F_\theta(x)
    \end{pmatrix}^T\nabla\pi_{0}^\theta
\end{align}
As $\pi_0^\theta=\pi_{BG}$ by construction, we can derive that
\begin{align*}
    \nabla\pi_{0}^\theta(x,v) =& 
    \nabla\pi_{BG}(x,v) \\
    =&\frac{1}{Z}\nabla[\exp(-U(x)-\frac{1}{2}\|v\|^2)]\\
    =&-\frac{1}{Z}\begin{pmatrix}
        \nabla U(x) \\
        v
    \end{pmatrix}\exp(-U(x)-\frac{1}{2}\|v\|^2)\\
    =&-\begin{pmatrix}
        \nabla U(x) \\
        v
    \end{pmatrix}\pi_{BG}(x,v)
\end{align*}
Inserting this into \cref{eq:joint_density_derivative}, we get
\begin{align*}
\frac{d}{dt}\pi_t^\theta(x,v)=&\begin{pmatrix}
        v \\
        F_\theta(x)
    \end{pmatrix}^T\begin{pmatrix}
        \nabla U(x) \\
        v
    \end{pmatrix}\pi_{BG}(x,v)\\
    =&v^T(F_\theta(x)+\nabla U(x))\pi_{BG}(x,v)
\end{align*}
And hence,
\begin{align*}
    \frac{d}{dt}\pi_t^\theta(x)=&\int\frac{d}{dt}\pi_t^\theta(x,v)dv \\
    =&\int v^T(F_\theta(x)+\nabla U(x))\pi_{BG}(x,v)dv\\
    =&\left[\int (F_\theta(x)+\nabla U(x))\pi(x)dv\right]^T\left[\int v \mathcal{N}(v;0,\mathbf{I}_d)dv\right]\\
    =&0
\end{align*}
We can insert these identities into \cref{eq:deriv_cond_exp} to get:
\begin{align*}
\frac{d}{dt}\mathbb{E}[v_t^{\theta}|x_t^{\theta}=x]_{|t=0}
=&\int v \frac{\pi_t^{\theta}(x)\frac{d}{dt}\pi_t^{\theta}(x,v)
-\pi_t^{\theta}(x,v)\frac{d}{dt}\pi_t^{\theta}(x)}{(\pi_t^{\theta}(x))^2}dv_{|t=0}\\
=&
\int v \frac{\pi(x)v^T(F_\theta(x)+\nabla U(x))\pi_{BG}(x,v)
-0}{\pi(x)^2}dv\\
=&
\int v \frac{v^T(F_\theta(x)+\nabla U(x))\pi(x)^2}{\pi(x)^2}\mathcal{N}(v;0\mathbf{I}_d)dv\\
=&
\left[\int v v^T\mathcal{N}(v;0\mathbf{I}_d)dv\right](F_\theta(x)+\nabla U(x))\\
=&
\mathbf{I}_d(F_\theta(x)+\nabla U(x))\\
=&F_\theta(x)+\nabla U(x)\\
\end{align*}
Combining this with \cref{eq:second_derivative_main_identity}, we get that:
\begin{align}
\frac{d^2}{d^2t}L(\theta,t)_{|t=0}
=&2\mathbb{E}[\|\frac{d}{dt}\mathbb{E}[v_t^{\theta}|x_t^{\theta}]_{|t=0}\|^2]\\
=&2\mathbb{E}[\|F_\theta(x)+\nabla U(x))\|^2]\\
=&2\mathbb{E}[\|F_\theta(x)-\nabla\log\pi(x)\|^2]\\
=&4L_{\text{esm}}(\theta;\pi)
\end{align}

\paragraph{Taylor approximation} Finally, we can combine the above derivations to get a Taylor approximation of $L_{\text{hsm}}(\theta,t)$ around $t=0$, i.e. for $\epsilon:\mathbb{R}\to\mathbb{R}$ with $\lim\limits_{t\to 0}\frac{1}{t^2}|\epsilon(t)|=0$ we get
\begin{align*}
\mathbb{D}_{\text{hsm}}(\theta|t,\theta) &=\mathbb{D}_{\text{hsm}}(\theta|0,\theta) + t\frac{d}{dt}\mathbb{D}_{\text{hsm}}(\theta|t,\theta)_{|t=0}+\frac{1}{2}t^2\mathbb{D}_{\text{hsm}}(\theta|t,\theta)_{|t=0}+\epsilon(t)\\
&=\frac{1}{2}t^2 4L_{\text{esm}}(\theta;\pi)+\epsilon(t)\\
&=2 t^2L_{\text{esm}}(\theta;\pi)+\epsilon(t)
\end{align*}
This finishes the proof.

\section{Proof of \cref{proposition:hgf_proposition}}
\label{appendix:proof_hgf_proposition}
Again, let's consider a probability distribution $\pi:\mathbb{R}^d\to\mathbb{R}$ and the ODE
\begin{align*}
    (\frac{d}{dt}x(t),\frac{d}{dt}v(t))^T = (v(t),F_\theta(x(t),t))
\end{align*}
where we know allow $F_\theta$ to be time-dependent. Let  $(x_t,v_t)$ be a solution to the above with ODE with $(x_0,v_0)=(x,v)\sim \pi\otimes\mathcal{N}(0,\mathbf{I}_d)$. In addition, write $\Pi(x,v,t)$ for the distribution at time $t$ (i.e. $(x_t,v_t)\sim\Pi(\cdot,\cdot,t)$) and the \textbf{location marginal}
\begin{align*}
    \int\Pi(x,v,t)dv = \pi(x,t)
\end{align*}
Finally, we write 
\begin{align*}
    V(x,t) = \mathbb{E}[v_t|x_t] = \int v\pi(v|x,t)dv = V_{\phi^*}(x,t)
\end{align*}
for the optimal velocity predictor.

\paragraph{Deriving marginal ODE.} We now show that the evolution of the first marginal can be replicated by an ODE that only depends on $V$. By the Fokker-Planck equation, we can derive for $G(x,v)=(v,F_\theta(x))^T$:
\begin{align*}
    \frac{\partial}{\partial t}\Pi(x,v,t) 
    =&
-\nabla_{x,v}\cdot[\Pi G](x,v,t)
    \\
    =&
-G(x,v,t)^T\nabla_{x,v}\Pi(x,v,t)-[\nabla_{x,v}\cdot G(x,v,t)]\Pi(x,v,t)
    \\
    =&-\begin{pmatrix}
        v \\
        F_\theta(x)
    \end{pmatrix}^T\nabla_{x,v}\Pi(x,v,t)\\
    =&-[v^T\nabla_{x}\Pi(x,v,t)+F_\theta(x)^T\nabla_{v}\Pi(x,v,t)]
\end{align*}
where we used in the third equation that $\nabla_{x,v}\cdot G=0$ (i.e. that $G$ is divergence-free). Therefore, we can derive that:
\begin{align*}
\frac{\partial}{\partial t}\pi(x,t) =& \int \frac{\partial}{\partial t}\Pi(x,v,t)dv \\
=&-\int[v^T\nabla_{x}\Pi(x,v,t)+F_\theta(x)^T\nabla_{v}\Pi(x,v,t)]dv\\
=&-\int v^T\nabla_{x}\Pi(x,v,t)dv-F_\theta(x)^T\int \nabla_{v}\Pi(x,v,t)dv\\
=&-\int v^T\nabla_{x}\Pi(x,v,t)dv-F_\theta(x)^T0\\
=&-\int v^T\nabla_{x}\Pi(x,v,t)dv\\
=&-\nabla_{x}\cdot\int v\Pi(x,v,t)dv\\
=&-\nabla_{x}\cdot[\Pi(x,t)\int v\frac{\Pi(x,v,t)}{\Pi(x,t)}dv]\\
=&-\nabla_{x}\cdot[\Pi(x,t)\int v \Pi(v|x,t)dv]\\
=&-\nabla_{x}\cdot[\Pi(x,t)V(x,t)]
\end{align*}
In other words, the vector field $V(x,t)$ satisfies the continuity equation. Therefore, if we initialize $x_T\sim \pi(\cdot,T)$ and evolve the ODE
\begin{align*}
    \frac{d}{dt}x(t)=V(x(t),t)
\end{align*}
backwards until $t=0$, we know that $x(0)\sim\pi(\cdot,0)=\pi$ - we sample from our data distribution.

\section{Reflection HGFs - A New Example of HGFs}
\label{appendix:reflection_hgfs}
We briefly show that HGFs can give us models other than Oscillation HGFs. For this, we introduce \emph{Reflection HGFs} here and plot the result of training them in \cref{fig:start_end_distribution_reflection_hgfs} and \cref{fig:evolution_reflection_hgfs}. The idea of the model is that particles can move freely in a box without collision with walls (“very strong forces”) at the boundaries of the data domain making the particles bounce back (this can be made rigorous with von Neumann boundary conditions). With normally distributed velocities, the distribution of particles will converge towards a uniform distribution. Further, this model can be trained in a simulation-free manner. We trained this HGF model on a simple toy distribution (see figure 3 and figure 4 in the attached PDF). Such a HGF model is distinct from previous models and illustrates that HGFs are not restricted to only Oscillation HGFs, diffusion models, or flow matching.

\begin{figure}[ht]
    \centering
\includegraphics[width=0.4\textwidth]{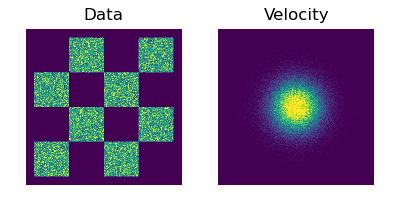}
    \label{subfig:taylor_approximation}
    \caption{Data distribution (left) and velocity distribution (right) used for Reflection HGFs as initial distribution. With the above starting conditions, a reflection (=”infinite force”) at the boundaries of the domain is used to simulate trajectories forward (this can be computed in closed form in a simulation-free manner).}
\label{fig:start_end_distribution_reflection_hgfs}
\end{figure}

\begin{figure}[ht]
    \centering
\includegraphics[width=\textwidth]{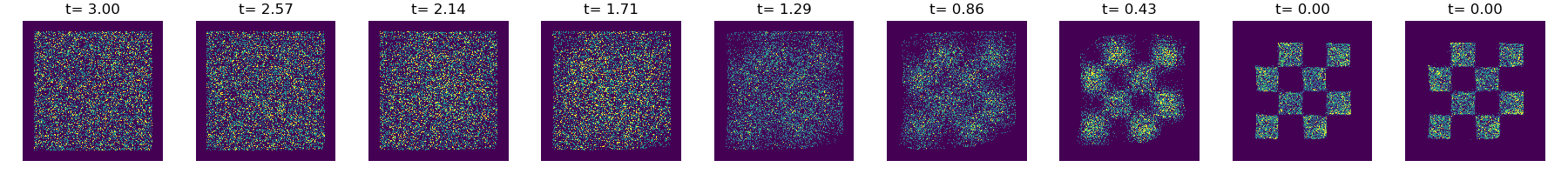}
    \label{subfig:taylor_approximation}
    \caption{Illustration of sampling with trained Reflection HGFs. At time $t=3.0$, the distribution is a uniform distribution (sampled by construction). By running the parameterized Hamiltonian ODE backwards in time, we recover the data distribution (see \cref{fig:start_end_distribution_reflection_hgfs}).}
    \label{fig:evolution_reflection_hgfs}
\end{figure}

\section{Connection between Flow Matching and HGFs}
\label{appendix:fm_and_hgfs}
Let $A:\mathbb{R}^d\times\mathbb{R}\to\mathbb{R}^d$ be a time-dependent vector field and $\psi$ the diffeomorphic flow defined by the ODE:
\begin{align*}
    \frac{d}{dt}x(t)=&A(x(t),t)\\
    x(0)=&x
\end{align*}
i.e. $t\mapsto\psi_t(x)$ is a solution to the above ODE. With this, we get:
\begin{align*}
    \frac{d^2}{d^2t}x(t)
&=\frac{d}{dt}\left[\frac{d}{dt}x(t)\right]\\
&=\frac{d}{dt}A(x(t),t)\\
&=D_{x}A(x(t),t)\dot{x}(t)+\frac{\partial}{\partial t}A(x(t),t)\\
&=D_{x}A(x(t),t)A(x(t),t)+\frac{\partial}{\partial t}A(x(t),t)
\end{align*}
Therefore, if we define the force field,
\begin{align*}
F(x,t)=D_xA(x,t)A(x,t)+\frac{\partial}{\partial t}A(x,t)
\end{align*}
we can extend the state space to $(x,v)$ and consider the ODE:
\begin{align*}
    (x(0),v(0))=&(x,A(x,0))\\
(\frac{d}{dt}x(t),\frac{d}{dt}v(t))=&
(v(t),F(x(t),t))
\end{align*}
Then every solution $(x_t,v_t)$ to the above ODE is also a solution to the flow matching ODE and vice versa. The conditional velocity predictor loss looks as follows:
\begin{align*}
    \mathbb{E}[\|V_\phi(\psi_t(x_t),t)-\frac{d}{dt}\psi_t(x_t)\|^2]
\end{align*}
This is exactly the conditional flow matching loss (see equation (14) in \citep{lipman2022flow}).

\section{Connection between EDM and Oscillation HGFs}

In this section, we discuss the relation between HGF and EDM~\citep{karras2022elucidating}. The EDM paper assumes the perturbation kernel is isotropic Gaussian with standard deviation $\sigma(t)$. Thus, the intermediate distribution $p_t(\cdot; \sigma(t)) = p_{data} * \mathcal{N}(\mathbf{0}, \sigma(t)\mathbf{I})$. If we further scale the original variable $x$ with $s(t)$ and consider $\Tilde{y}=s(t)x$, \cite{karras2022elucidating} shows that the corresponding backward ODE of $\Tilde{y}$ is as follows:
\begin{align*}
    \mathrm{d}\Tilde{y} = [\dot{s}(t)\Tilde{y}/s(t) - s(t)^2\dot{\sigma}(t)\sigma(t) \nabla_{\Tilde{y}} \log p(\Tilde{y}/s(t);\sigma(t))]\mathrm{d}t \numberthis \label{eq:edm-ode}
\end{align*}

We will show that, the minimizer of the objective of the Oscillation HFG, \textit{i.e.,} $
\mathbb{E}_{y\sim\pi,v\sim\mathcal{N}(0,\mathbf{I}_d)}[\|V_\phi(\cos( t)y+\sin(t)v,t)-[-\sin(t)y+\cos(t)v]\|^2]$, equals to the drift term in Eq.~\ref{eq:edm-ode}, when setting $s(t) = \cos(t) $ and $\sigma(t) = \tan(t)$. 

Denote $\Tilde{y} = \cos(t)y+\sin(t)v$, then the training objective can be rewritten as $
\mathbb{E}_{y\sim\pi,v\sim\mathcal{N}(0,\mathbf{I}_d)}[\|V_\phi(\Tilde{y},t)-[-\frac{y}{\sin(t)} + \frac{\Tilde{y}}{\tan(y)}]\|^2]$. The minimizer of the training objective is
\begin{align*}
    V_\phi^*(\Tilde{y},t) = \E_{y|\Tilde{y}}\left[-\frac{y}{\sin(t)}\right] + \frac{\Tilde{y}}{\tan(y)} \numberthis \label{eq:v-optimal}
\end{align*}

On the other hand, we can re-express the score function in Eq.~\ref{eq:edm-ode} as
\begin{align*}
    \nabla_{\Tilde{y}} \log p(\Tilde{y}/\cos(t);\tan(t)) &= \nabla_{\frac{\Tilde{y}}{\cos(t)}} \log p(\Tilde{y}/\cos(t);\tan(t)) \frac{1}{\cos(t)}\\
    &=  \frac{\E_{y|\Tilde{y}}[y] - \Tilde{y}/\cos(t)}{\tan^2(t)} \frac{1}{\cos(t)} \numberthis \label{eq:score-y}
\end{align*}

Plug Eq.~\ref{eq:score-y} into the backward ODE~(Eq.~\ref{eq:edm-ode}), we have:
\begin{align*}
    \mathrm{d}\Tilde{y} &= [\dot{s}(t)\Tilde{y}/s(t) - s(t)^2\dot{\sigma}(t)\sigma(t) \nabla_{\Tilde{y}} \log p(\Tilde{y}/s(t);\sigma(t))]\mathrm{d}t \\
    &=\left[-\tan(t)\Tilde{y} - \tan(t)(\frac{\E_{y|\Tilde{y}}[y] - \Tilde{y}/\cos(t)}{\tan^2(t)} \frac{1}{\cos(t)} ) \right]\mathrm{d}t\\
    &= \left[\E_{y|\Tilde{y}}\left[-\frac{y}{\sin(t)}+\frac{\Tilde{y}}{\tan(y)}\right] \right] \mathrm{d}t\numberthis \label{eq:edm-ode-scale}
\end{align*}

in which the drift term matches the optimal velocity predictor in Eq.~\ref{eq:score-y}. Hence, when picking the proper scaling factors, the backward ODE~(Eq.~\ref{eq:edm-ode}) is equivalent to $\mathrm{d}\Tilde{y} = V_\phi^*(\Tilde{y},t)\mathrm{d}t$.

Recall that the EDM paper employs the simple scaling $s(t)=1$ and $\sigma(t)=t$ in Eq.~\ref{eq:edm-ode}. Hence, to align with the time discretization $\{t_1, \dots, t_n\}$ used in EDM during sampling, it suffices to set the time discretization in Oscillation HFG to $\{\arctan(t_1), \dots, \arctan(t_n)\}$, to ensure that the score functions are evaluated on the same $\sigma$s.

\paragraph{Remark.} Rescaling of the EDM ODE will necessarily lead to the same endpoint if $s(0)=1$ - this is the case by construction. Similarly, changing the noise schedule will lead to the same ODE. However, mapping discretizations will \emph{not} result in the same ODE. The reason for that is that in general
\begin{align*}
    s'(t_{n+1})(t_{n+1}-t_{n})\neq& s(t_{t+1})-s(t_{n})\\
    \sigma'(t_{n+1})(t_{n+1}-t_{n})\neq& \sigma(t_{t+1})-\sigma(t_{n})\\
\end{align*}

\section{Details for Image Generation Benchmarks}
\label{appendix:image_generation_benchmark}

In this section, we include more details about the training and sampling of Oscillation HGFs. All the experiments are run on $8$ NVIDIA A100 GPUs. We used PyTorch as a library for automatic differentiation \citep{paszke2019pytorch}. Our image processing pipeline follows \citep{karras2022elucidating}. We use the DDPM++ backbone \citep{song2020score, ho2020denoising}. The preconditioning was removed. We set the reference batch size to $516$ on CIFAR-10 and $256$ on FFHQ. We train for $200$ million images in total, corresponding to approximately $3000$ epochs and $\sim48$ hours of training time for CIFAR-10 and $\sim96$ hours for FFHQ. As outlined in the experiments section, the hyperparameters and training procedure are the same as \citep{karras2022elucidating}: namely, we used the Adam optimizer with learning rate $0.001$, exponential moving average (EMA) with momentum $0.5$,  data augmentation pipeline adapted from \citep{karras2020analyzing}, dropout probability of $0.13$, and FP32 precision. For sampling, we use the 2nd order Heun's sampler \citep{karras2022elucidating}.


\newpage
\section*{NeurIPS Paper Checklist}

The checklist is designed to encourage best practices for responsible machine learning research, addressing issues of reproducibility, transparency, research ethics, and societal impact. Do not remove the checklist: {\bf The papers not including the checklist will be desk rejected.} The checklist should follow the references and precede the (optional) supplemental material.  The checklist does NOT count towards the page
limit. 

Please read the checklist guidelines carefully for information on how to answer these questions. For each question in the checklist:
\begin{itemize}
    \item You should answer \answerYes{}, \answerNo{}, or \answerNA{}.
    \item \answerNA{} means either that the question is Not Applicable for that particular paper or the relevant information is Not Available.
    \item Please provide a short (1–2 sentence) justification right after your answer (even for NA). 
\end{itemize}

{\bf The checklist answers are an integral part of your paper submission.} They are visible to the reviewers, area chairs, senior area chairs, and ethics reviewers. You will be asked to also include it (after eventual revisions) with the final version of your paper, and its final version will be published with the paper.

The reviewers of your paper will be asked to use the checklist as one of the factors in their evaluation. While "\answerYes{}" is generally preferable to "\answerNo{}", it is perfectly acceptable to answer "\answerNo{}" provided a proper justification is given (e.g., "error bars are not reported because it would be too computationally expensive" or "we were unable to find the license for the dataset we used"). In general, answering "\answerNo{}" or "\answerNA{}" is not grounds for rejection. While the questions are phrased in a binary way, we acknowledge that the true answer is often more nuanced, so please just use your best judgment and write a justification to elaborate. All supporting evidence can appear either in the main paper or the supplemental material, provided in appendix. If you answer \answerYes{} to a question, in the justification please point to the section(s) where related material for the question can be found.

IMPORTANT, please:
\begin{itemize}
    \item {\bf Delete this instruction block, but keep the section heading ``NeurIPS paper checklist"},
    \item  {\bf Keep the checklist subsection headings, questions/answers and guidelines below.}
    \item {\bf Do not modify the questions and only use the provided macros for your answers}.
\end{itemize}


\begin{enumerate}

\item {\bf Claims}
    \item[] Question: Do the main claims made in the abstract and introduction accurately reflect the paper's contributions and scope?
    \item[] Answer: \answerYes{} 
    \item[] Justification: The abstract is a summary of the contributions and the scope of the method.
    \item[] Guidelines:
    \begin{itemize}
        \item The answer NA means that the abstract and introduction do not include the claims made in the paper.
        \item The abstract and/or introduction should clearly state the claims made, including the contributions made in the paper and important assumptions and limitations. A No or NA answer to this question will not be perceived well by the reviewers. 
        \item The claims made should match theoretical and experimental results, and reflect how much the results can be expected to generalize to other settings. 
        \item It is fine to include aspirational goals as motivation as long as it is clear that these goals are not attained by the paper. 
    \end{itemize}

\item {\bf Limitations}
    \item[] Question: Does the paper discuss the limitations of the work performed by the authors?
    \item[] Answer: \answerYes{} 
    \item[] Justification: While we did not include the a separate "Limitations" section, we discussed limitations in the "Conclusion" section as well as in the main text (e.g. \cref{subsec:hsm}).
    \item[] Guidelines:
    \begin{itemize}
        \item The answer NA means that the paper has no limitation while the answer No means that the paper has limitations, but those are not discussed in the paper. 
        \item The authors are encouraged to create a separate "Limitations" section in their paper.
        \item The paper should point out any strong assumptions and how robust the results are to violations of these assumptions (e.g., independence assumptions, noiseless settings, model well-specification, asymptotic approximations only holding locally). The authors should reflect on how these assumptions might be violated in practice and what the implications would be.
        \item The authors should reflect on the scope of the claims made, e.g., if the approach was only tested on a few datasets or with a few runs. In general, empirical results often depend on implicit assumptions, which should be articulated.
        \item The authors should reflect on the factors that influence the performance of the approach. For example, a facial recognition algorithm may perform poorly when image resolution is low or images are taken in low lighting. Or a speech-to-text system might not be used reliably to provide closed captions for online lectures because it fails to handle technical jargon.
        \item The authors should discuss the computational efficiency of the proposed algorithms and how they scale with dataset size.
        \item If applicable, the authors should discuss possible limitations of their approach to address problems of privacy and fairness.
        \item While the authors might fear that complete honesty about limitations might be used by reviewers as grounds for rejection, a worse outcome might be that reviewers discover limitations that aren't acknowledged in the paper. The authors should use their best judgment and recognize that individual actions in favor of transparency play an important role in developing norms that preserve the integrity of the community. Reviewers will be specifically instructed to not penalize honesty concerning limitations.
    \end{itemize}

\item {\bf Theory Assumptions and Proofs}
    \item[] Question: For each theoretical result, does the paper provide the full set of assumptions and a complete (and correct) proof?
    \item[] Answer: \answerYes{} 
    \item[] Justification: The assumptions are fully stated in the supplementary material and referenced.
    \item[] Guidelines:
    \begin{itemize}
        \item The answer NA means that the paper does not include theoretical results. 
        \item All the theorems, formulas, and proofs in the paper should be numbered and cross-referenced.
        \item All assumptions should be clearly stated or referenced in the statement of any theorems.
        \item The proofs can either appear in the main paper or the supplemental material, but if they appear in the supplemental material, the authors are encouraged to provide a short proof sketch to provide intuition. 
        \item Inversely, any informal proof provided in the core of the paper should be complemented by formal proofs provided in appendix or supplemental material.
        \item Theorems and Lemmas that the proof relies upon should be properly referenced. 
    \end{itemize}

    \item {\bf Experimental Result Reproducibility}
    \item[] Question: Does the paper fully disclose all the information needed to reproduce the main experimental results of the paper to the extent that it affects the main claims and/or conclusions of the paper (regardless of whether the code and data are provided or not)?
    \item[] Answer: \answerYes{} 
    \item[] Justification: We provide experimental details in the supplementary material (\cref{appendix:image_generation_benchmark}).
    \item[] Guidelines:
    \begin{itemize}
        \item The answer NA means that the paper does not include experiments.
        \item If the paper includes experiments, a No answer to this question will not be perceived well by the reviewers: Making the paper reproducible is important, regardless of whether the code and data are provided or not.
        \item If the contribution is a dataset and/or model, the authors should describe the steps taken to make their results reproducible or verifiable. 
        \item Depending on the contribution, reproducibility can be accomplished in various ways. For example, if the contribution is a novel architecture, describing the architecture fully might suffice, or if the contribution is a specific model and empirical evaluation, it may be necessary to either make it possible for others to replicate the model with the same dataset, or provide access to the model. In general. releasing code and data is often one good way to accomplish this, but reproducibility can also be provided via detailed instructions for how to replicate the results, access to a hosted model (e.g., in the case of a large language model), releasing of a model checkpoint, or other means that are appropriate to the research performed.
        \item While NeurIPS does not require releasing code, the conference does require all submissions to provide some reasonable avenue for reproducibility, which may depend on the nature of the contribution. For example
        \begin{enumerate}
            \item If the contribution is primarily a new algorithm, the paper should make it clear how to reproduce that algorithm.
            \item If the contribution is primarily a new model architecture, the paper should describe the architecture clearly and fully.
            \item If the contribution is a new model (e.g., a large language model), then there should either be a way to access this model for reproducing the results or a way to reproduce the model (e.g., with an open-source dataset or instructions for how to construct the dataset).
            \item We recognize that reproducibility may be tricky in some cases, in which case authors are welcome to describe the particular way they provide for reproducibility. In the case of closed-source models, it may be that access to the model is limited in some way (e.g., to registered users), but it should be possible for other researchers to have some path to reproducing or verifying the results.
        \end{enumerate}
    \end{itemize}

\item {\bf Open access to data and code}
    \item[] Question: Does the paper provide open access to the data and code, with sufficient instructions to faithfully reproduce the main experimental results, as described in supplemental material?
    \item[] Answer: \answerNo{}{}
    \item[] Justification: Code can be provided upon request.
    \item[] Guidelines:
    \begin{itemize}
        \item The answer NA means that paper does not include experiments requiring code.
        \item Please see the NeurIPS code and data submission guidelines (\url{https://nips.cc/public/guides/CodeSubmissionPolicy}) for more details.
        \item While we encourage the release of code and data, we understand that this might not be possible, so “No” is an acceptable answer. Papers cannot be rejected simply for not including code, unless this is central to the contribution (e.g., for a new open-source benchmark).
        \item The instructions should contain the exact command and environment needed to run to reproduce the results. See the NeurIPS code and data submission guidelines (\url{https://nips.cc/public/guides/CodeSubmissionPolicy}) for more details.
        \item The authors should provide instructions on data access and preparation, including how to access the raw data, preprocessed data, intermediate data, and generated data, etc.
        \item The authors should provide scripts to reproduce all experimental results for the new proposed method and baselines. If only a subset of experiments are reproducible, they should state which ones are omitted from the script and why.
        \item At submission time, to preserve anonymity, the authors should release anonymized versions (if applicable).
        \item Providing as much information as possible in supplemental material (appended to the paper) is recommended, but including URLs to data and code is permitted.
    \end{itemize}

\item {\bf Experimental Setting/Details}
    \item[] Question: Does the paper specify all the training and test details (e.g., data splits, hyperparameters, how they were chosen, type of optimizer, etc.) necessary to understand the results?
    \item[] Answer: \answerYes{} 
    \item[] Justification: We list them in the supplementary material.
    \item[] Guidelines:
    \begin{itemize}
        \item The answer NA means that the paper does not include experiments.
        \item The experimental setting should be presented in the core of the paper to a level of detail that is necessary to appreciate the results and make sense of them.
        \item The full details can be provided either with the code, in appendix, or as supplemental material.
    \end{itemize}

\item {\bf Experiment Statistical Significance}
    \item[] Question: Does the paper report error bars suitably and correctly defined or other appropriate information about the statistical significance of the experiments?
    \item[] Answer: \answerNo{} 
    \item[] Justification: Generating error bars for image generation benchmarks at this scale would be computationally infeasible.
    \item[] Guidelines:
    \begin{itemize}
        \item The answer NA means that the paper does not include experiments.
        \item The authors should answer "Yes" if the results are accompanied by error bars, confidence intervals, or statistical significance tests, at least for the experiments that support the main claims of the paper.
        \item The factors of variability that the error bars are capturing should be clearly stated (for example, train/test split, initialization, random drawing of some parameter, or overall run with given experimental conditions).
        \item The method for calculating the error bars should be explained (closed form formula, call to a library function, bootstrap, etc.)
        \item The assumptions made should be given (e.g., Normally distributed errors).
        \item It should be clear whether the error bar is the standard deviation or the standard error of the mean.
        \item It is OK to report 1-sigma error bars, but one should state it. The authors should preferably report a 2-sigma error bar than state that they have a 96\% CI, if the hypothesis of Normality of errors is not verified.
        \item For asymmetric distributions, the authors should be careful not to show in tables or figures symmetric error bars that would yield results that are out of range (e.g. negative error rates).
        \item If error bars are reported in tables or plots, The authors should explain in the text how they were calculated and reference the corresponding figures or tables in the text.
    \end{itemize}

\item {\bf Experiments Compute Resources}
    \item[] Question: For each experiment, does the paper provide sufficient information on the computer resources (type of compute workers, memory, time of execution) needed to reproduce the experiments?
    \item[] Answer: \answerYes{} 
    \item[] Justification: We provide them in the supplementary material.
    \item[] Guidelines:
    \begin{itemize}
        \item The answer NA means that the paper does not include experiments.
        \item The paper should indicate the type of compute workers CPU or GPU, internal cluster, or cloud provider, including relevant memory and storage.
        \item The paper should provide the amount of compute required for each of the individual experimental runs as well as estimate the total compute. 
        \item The paper should disclose whether the full research project required more compute than the experiments reported in the paper (e.g., preliminary or failed experiments that didn't make it into the paper). 
    \end{itemize}
    
\item {\bf Code Of Ethics}
    \item[] Question: Does the research conducted in the paper conform, in every respect, with the NeurIPS Code of Ethics \url{https://neurips.cc/public/EthicsGuidelines}?
    \item[] Answer: \answerYes{} 
    \item[] Justification: We follow all guidelines and the code of conduct.
    \item[] Guidelines:
    \begin{itemize}
        \item The answer NA means that the authors have not reviewed the NeurIPS Code of Ethics.
        \item If the authors answer No, they should explain the special circumstances that require a deviation from the Code of Ethics.
        \item The authors should make sure to preserve anonymity (e.g., if there is a special consideration due to laws or regulations in their jurisdiction).
    \end{itemize}

\item {\bf Broader Impacts}
    \item[] Question: Does the paper discuss both potential positive societal impacts and negative societal impacts of the work performed?
    \item[] Answer: \answerNo{} 
    \item[] Justification: The core contribution of our paper is a method. The extent of the societal impact via deepfakes is not changed due to this work.
    \item[] Guidelines:
    \begin{itemize}
        \item The answer NA means that there is no societal impact of the work performed.
        \item If the authors answer NA or No, they should explain why their work has no societal impact or why the paper does not address societal impact.
        \item Examples of negative societal impacts include potential malicious or unintended uses (e.g., disinformation, generating fake profiles, surveillance), fairness considerations (e.g., deployment of technologies that could make decisions that unfairly impact specific groups), privacy considerations, and security considerations.
        \item The conference expects that many papers will be foundational research and not tied to particular applications, let alone deployments. However, if there is a direct path to any negative applications, the authors should point it out. For example, it is legitimate to point out that an improvement in the quality of generative models could be used to generate deepfakes for disinformation. On the other hand, it is not needed to point out that a generic algorithm for optimizing neural networks could enable people to train models that generate Deepfakes faster.
        \item The authors should consider possible harms that could arise when the technology is being used as intended and functioning correctly, harms that could arise when the technology is being used as intended but gives incorrect results, and harms following from (intentional or unintentional) misuse of the technology.
        \item If there are negative societal impacts, the authors could also discuss possible mitigation strategies (e.g., gated release of models, providing defenses in addition to attacks, mechanisms for monitoring misuse, mechanisms to monitor how a system learns from feedback over time, improving the efficiency and accessibility of ML).
    \end{itemize}
    
\item {\bf Safeguards}
    \item[] Question: Does the paper describe safeguards that have been put in place for responsible release of data or models that have a high risk for misuse (e.g., pretrained language models, image generators, or scraped datasets)?
    \item[] Answer: \answerNo{}{} 
    \item[] Justification: We do not release any data. We will release models upon publication.
    \item[] Guidelines:
    \begin{itemize}
        \item The answer NA means that the paper poses no such risks.
        \item Released models that have a high risk for misuse or dual-use should be released with necessary safeguards to allow for controlled use of the model, for example by requiring that users adhere to usage guidelines or restrictions to access the model or implementing safety filters. 
        \item Datasets that have been scraped from the Internet could pose safety risks. The authors should describe how they avoided releasing unsafe images.
        \item We recognize that providing effective safeguards is challenging, and many papers do not require this, but we encourage authors to take this into account and make a best faith effort.
    \end{itemize}

\item {\bf Licenses for existing assets}
    \item[] Question: Are the creators or original owners of assets (e.g., code, data, models), used in the paper, properly credited and are the license and terms of use explicitly mentioned and properly respected?
    \item[] Answer: \answerYes{} 
    \item[] Justification: We list the resources we used in the supplementary material.
    \item[] Guidelines:
    \begin{itemize}
        \item The answer NA means that the paper does not use existing assets.
        \item The authors should cite the original paper that produced the code package or dataset.
        \item The authors should state which version of the asset is used and, if possible, include a URL.
        \item The name of the license (e.g., CC-BY 4.0) should be included for each asset.
        \item For scraped data from a particular source (e.g., website), the copyright and terms of service of that source should be provided.
        \item If assets are released, the license, copyright information, and terms of use in the package should be provided. For popular datasets, \url{paperswithcode.com/datasets} has curated licenses for some datasets. Their licensing guide can help determine the license of a dataset.
        \item For existing datasets that are re-packaged, both the original license and the license of the derived asset (if it has changed) should be provided.
        \item If this information is not available online, the authors are encouraged to reach out to the asset's creators.
    \end{itemize}

\item {\bf New Assets}
    \item[] Question: Are new assets introduced in the paper well documented and is the documentation provided alongside the assets?
    \item[] Answer: \answerNA{} 
    \item[] Justification: No assets are released.
    \item[] Guidelines:
    \begin{itemize}
        \item The answer NA means that the paper does not release new assets.
        \item Researchers should communicate the details of the dataset/code/model as part of their submissions via structured templates. This includes details about training, license, limitations, etc. 
        \item The paper should discuss whether and how consent was obtained from people whose asset is used.
        \item At submission time, remember to anonymize your assets (if applicable). You can either create an anonymized URL or include an anonymized zip file.
    \end{itemize}

\item {\bf Crowdsourcing and Research with Human Subjects}
    \item[] Question: For crowdsourcing experiments and research with human subjects, does the paper include the full text of instructions given to participants and screenshots, if applicable, as well as details about compensation (if any)? 
    \item[] Answer: \answerNA{} 
    \item[] Justification: No research on human subjects was performed.
    \item[] Guidelines:
    \begin{itemize}
        \item The answer NA means that the paper does not involve crowdsourcing nor research with human subjects.
        \item Including this information in the supplemental material is fine, but if the main contribution of the paper involves human subjects, then as much detail as possible should be included in the main paper. 
        \item According to the NeurIPS Code of Ethics, workers involved in data collection, curation, or other labor should be paid at least the minimum wage in the country of the data collector. 
    \end{itemize}

\item {\bf Institutional Review Board (IRB) Approvals or Equivalent for Research with Human Subjects}
    \item[] Question: Does the paper describe potential risks incurred by study participants, whether such risks were disclosed to the subjects, and whether Institutional Review Board (IRB) approvals (or an equivalent approval/review based on the requirements of your country or institution) were obtained?
    \item[] Answer: \answerNA{} 
    \item[] Justification: No applicable in this case.
    \item[] Guidelines:
    \begin{itemize}
        \item The answer NA means that the paper does not involve crowdsourcing nor research with human subjects.
        \item Depending on the country in which research is conducted, IRB approval (or equivalent) may be required for any human subjects research. If you obtained IRB approval, you should clearly state this in the paper. 
        \item We recognize that the procedures for this may vary significantly between institutions and locations, and we expect authors to adhere to the NeurIPS Code of Ethics and the guidelines for their institution. 
        \item For initial submissions, do not include any information that would break anonymity (if applicable), such as the institution conducting the review.
    \end{itemize}
\end{enumerate}

\end{document}